\algnewcommand{\Break}{\textbf{break}}
\crefname{equation}{eq.}{eqs.}                % 数式参照時の表記を設定
\Crefname{equation}{Eq.}{Eqs.}
\crefname{step}{Step}{Steps}
\Crefname{step}{Step}{Steps}
\Crefname{lem}{Lemma}{Lemmas}
\tikzstyle{tikzfig}=[baseline=-0.25em,scale=0.5]
\tikzstyle{none}=[inner sep=0mm]
\tikzstyle{rectangle}=[fill=white, draw=black, shape=rectangle]
\tikzstyle{circle}=[fill=white, draw=black, shape=circle]
\tikzstyle{vertex}=[fill=black, draw=none, shape=circle]
\tikzstyle{textbox}=[fill=white, draw=none, shape=rectangle]
\tikzstyle{line}=[-, fill=none]
\tikzstyle{rightarrow}=[->]
\tikzstyle{leftarrow}=[fill=none, <-]
\newcommand{\Bcal}{\mathcal{B}}
\newcommand{\Dcal}{\mathcal{D}}
\newcommand{\Fcal}{\mathcal{F}}
\newcommand{\Gcal}{\mathcal{G}}
\newcommand{\Hcal}{\mathcal{H}}
\newcommand{\Pcal}{\mathcal{P}}
\newcommand{\Rcal}{\mathcal{R}}
\newcommand{\Ucal}{\mathcal{U}}
\newcommand{\Ycal}{\mathcal{Y}}
\newcommand{\R}{\mathbb{R}}
\newcommand{\e}{\mathrm{e}}
\newcommand{\bb}{{\bm{b}}}
\newcommand{\zb}{{\bm{z}}}
\newcommand{\Ord}{\mathrm{O}}
\newtheorem{theorem}{Theorem}% Theorem
\newtheorem{lemma}{Lemma}% Lemma
\newtheorem{proposition}{Proposition}% Proposition
\newtheorem{assumption}{Assumption}% Assumption
\theoremstyle{definition}
\newtheorem{definition}{Definition}% Definition
\newtheorem{example}{Example}% Example
\DeclareMathOperator*{\argmin}{argmin}
\DeclareMathOperator{\poly}{poly}
\DeclareMathOperator{\E}{\mathbb{E}}
\DeclarePairedDelimiter\abs{\lvert}{\rvert}
\DeclarePairedDelimiter\iprod{\langle}{\rangle}
\let\set\relax
\DeclarePairedDelimiter\set{\{}{\}}
\let\Set\relax
\DeclarePairedDelimiterX\Set[2]{\{}{\}}{\mspace{2mu}{#1}\;\delimsize|\;{#2}\mspace{2mu}}
\DeclarePairedDelimiter\brc{[}{]}
\DeclarePairedDelimiterX\Brc[2]{[}{]}{\mspace{2mu}{#1}\;\delimsize|\;{#2}\mspace{2mu}}
\DeclarePairedDelimiter\prn{(}{)}
\DeclarePairedDelimiterX\Prn[2]{(}{)}{\mspace{2mu}{#1}\;\delimsize|\;{#2}\mspace{2mu}}
\newif\iffigure
\newcommand{\shortlongproofs}[2]{{#2}}
\title{Sample Complexity of Learning Heuristic Functions \\ for Greedy-Best-First and A* Search}
\author{%
Shinsaku Sakaue\\
The University of Tokyo\\
Tokyo, Japan\\
\href{mailto:sakaue@mist.i.u-tokyo.ac.jp}{sakaue@mist.i.u-tokyo.ac.jp} 
\and
Taihei Oki\\
The University of Tokyo\\
Tokyo, Japan\\
\href{mailto:oki@mist.i.u-tokyo.ac.jp}{oki@mist.i.u-tokyo.ac.jp} 
}
\begin{document}

\maketitle

\begin{abstract}
Greedy best-first search (GBFS) and A* search (A*) are popular algorithms for path-finding on large graphs. Both use so-called heuristic functions, which estimate how close a vertex is to the goal. While heuristic functions have been handcrafted using domain knowledge, recent studies demonstrate that learning heuristic functions from data is effective in many applications. Motivated by this emerging approach, we study the sample complexity of learning heuristic functions for GBFS and A*. We build on a recent framework called \textit{data-driven algorithm design} and evaluate the \textit{pseudo-dimension} of a class of utility functions that measure the performance of parameterized algorithms. Assuming that a vertex set of size $n$ is fixed, we present $\mathrm{O}(n\lg n)$ and $\mathrm{O}(n^2\lg n)$ upper bounds on the pseudo-dimensions for GBFS and A*, respectively, parameterized by heuristic function values. The upper bound for A* can be improved to $\mathrm{O}(n^2\lg d)$ if every vertex has a degree of at most $d$ and to $\mathrm{O}(n \lg n)$ if edge weights are integers bounded by $\mathrm{poly}(n)$. We also give $\Omega(n)$ lower bounds for GBFS and A*, which imply that our bounds for GBFS and A* under the integer-weight condition are tight up to a $\lg n$ factor. Finally, we discuss a case where the performance of A* is measured by the suboptimality and show that we can sometimes obtain a better guarantee by combining a parameter-dependent worst-case bound with a sample complexity bound. 
\end{abstract}

\newcommand{\Pdim}{\mathrm{Pdim}}
\newcommand{\VCdim}{\mathrm{VCdim}}
\newcommand{\Ibb}{\mathbb{I}}

\newcommand{\bdf}{b}
\newcommand{\pf}{f}
\newcommand{\nbd}{K} % number of boundaries

\newcommand{\rhob}{{\bm{\rho}}}
\newcommand{\xinst}{x}
\newcommand{\wset}{\set*{w_e}_{e \in E}}

\newcommand{\opt}{\mathrm{Opt}}
\newcommand{\cost}{\mathrm{Cost}}

\newcommand{\astar}{A*\xspace}
\newcommand{\gbfs}{GBFS\xspace}

\newcommand{\open}{\texttt{OPEN}}
\newcommand{\closed}{\texttt{CLOSED}}
\newcommand{\pointer}{\texttt{p}}

\newcommand{\vc}{c}

\section{Introduction}\label{sec:introduction}
Given a graph with a start vertex $s$, a goal vertex $t$, and non-negative edge weights, we consider finding an $s$--$t$ path with a small total weight. 
The Dijkstra algorithm \citep{Dijkstra1959-ai} finds an optimal path by exploring all vertices that are as close to $s$ as $t$.
It, however, is sometimes impractical for large graphs since exploring all such vertices is too costly. 
Heuristic search algorithms are used to address such situations; 
among them, greedy best-first search (\gbfs) \citep{Doran1966-pj} and \astar search (\astar) \citep{Hart1968-jm} are two popular algorithms.  
Both \gbfs and \astar use so-called heuristic functions, which estimate how close an input vertex is to $t$. 
\gbfs/\astar attempts to avoid redundant exploration by scoring vertices based on heuristic function values and iteratively expanding vertices with the smallest score. 
If well-suited heuristic functions are available, \gbfs/\astar can run much faster than the Dijkstra algorithm. 
Furthermore, if \astar uses an \textit{admissible} heuristic function, i.e., it never overestimates the shortest-path distance to $t$, it always finds an optimal path \citep{Hart1968-jm}. 
Traditionally, heuristic functions have been made based on domain knowledge; e.g., if graphs are road networks, the Euclidean distance gives an admissible heuristic. 

When applying \gbfs/\astar to various real-world problems, a laborious process is to handcraft heuristic functions. 
Learning heuristic functions from data can be a promising approach to overcoming the obstacle due to the recent development of technologies for collecting graph data. 
Reseachers have demonstrated the effectiveness of this approach in robotics \citep{Bhardwaj2017-lk,Takahashi2019-il,Pandy2021-wn,Yonetani2021-wn}, computational organic chemistry \citep{Chen2020-xa}, and predestrian trajectory prediction \citep{Yonetani2021-wn}. 
With learned heuristic functions, however, obtaining theoretical guarantees is difficult since we can hardly understand how the search can be guided by such heuristic functions. 
(A recent paper \citep{Agostinelli2021-dy} studies learning of admissible heuristics for \astar, but the optimality is confirmed only empirically.) 
Moreover, learned heuristic functions may be overfitting to problem instances at hand. 
That is, even if \gbfs/\astar with learned heuristic functions perform well over training instances, they may deliver poor future performance. 
In summary, the emerging line of work on search algorithms with learned heuristic functions is awaiting a theoretical foundation for guaranteeing their performance in a data-driven manner. 
Thus, a natural question is: 
\textit{how many sampled instances are needed to learn heuristic functions with generalization guarantees on the performance of resulting \gbfs/\astar?}  

\subsection{Our contribution}\label{subsec:contribution}
We address the above question, assuming that path-finding instances defined on a fixed vertex set of size $n$ are drawn i.i.d. from an unknown distribution. 
Our analysis is based on so-called \textit{data-driven algorithm design} \citep{Gupta2017-ng,Balcan2021-fy}, a PAC-learning framework for bounding the sample complexity of algorithm configuration. 
In the analysis, the most crucial step is to evaluate the \textit{pseudo-dimension} of a class of utility functions that measure the performance of parameterized algorithms. 
We study the case where \gbfs/\astar is parameterized by heuristic function values and make the following contributions: 

\begin{enumerate}
	\item 
	\Cref{sec:upper_bound} gives $\Ord(n\lg n)$ and $\Ord(n^2\lg n)$ upper bounds on the pseudo-dimensions for \gbfs and \astar, respectively. 
	The bound for \astar can be improved to $\Ord(n^2\lg d)$ if every vertex has an at most $d$ degree and to $\Ord(n \lg n)$ if edge weights are non-negative integers at most $\poly(n)$. 	
	\item \Cref{sec:lower_bound} presents $\Omega(n)$ lower bounds on the pseudo-dimensions for \gbfs and \astar. 
	We prove this result by constructing $\Omega(n)$ instances with unweighted graphs. 
	Thus, our bounds for GBFS and A* under the integer edge-weight condition are tight up to a $\lg n$ factor.
	\item \Cref{sec:suboptimality_astar} studies a particular case of bounding the suboptimality of \astar. 
	We show that we can sometimes improve the guarantee obtained in \Cref{sec:upper_bound} by using an alternative $\Ord(n \lg n)$ bound on the pseudo-dimension of a class of parameter-dependent worst-case bounds \citep{Valenzano2014-sw}.  
\end{enumerate}

An important consequence of the above results is the tightness up to a $\lg n$ factor for \gbfs and \astar under the integer-weight assumption. 
Note that this assumption holds in various realistic situations. 
For example, the Internet network and state-space graphs of games are unweighted (unit-weight) graphs, and \astar is often applied to path-finding instances on such graphs.

\subsection{Related work}\label{subsec:related_work}
\paragraph{Data-driven algorithm design.}
\citet{Gupta2017-ng} proposed a PAC approach for bounding the sample complexity of algorithm configuration, which is called \textit{data-driven algorithm design} and has been applied to a broad family of algorithms, including greedy, clustering, and sequence alignment algorithms. 
We refer the reader to a nice survey \citep{Balcan2021-fy}. 
A recent line of work \citep{Balcan2018-pe,Balcan2021-kv,Balcan2021-fz} has extensively studied the sample complexity of configuring integer-programming methods, e.g., branch-and-bound and branch-and-cut. 
In \citep{Balcan2021-kv, Balcan2021-fz}, upper bounds on the pseudo-dimension for general tree search are presented, which are most closely related to our results. 
Our upper bounds, which are obtained by using specific properties of \gbfs/\astar, are better than the previous bounds for general tree search, as detailed in \Cref{app_sec:comparison}. 
\citet{Balcan2021-jv} presented a general framework for evaluating the pseudo-dimension. 
Their idea is to suppose that performance measures form a class of functions of algorithm parameters, called \textit{dual} functions, and characterize its complexity based on how they are piecewise structured. 
This idea plays a key role in the analysis of \citep{Balcan2021-kv,Balcan2021-fz}, and our analysis of the upper bounds are also inspired by their idea. 
Its application to our setting, however, requires a close look at the behavior of \gbfs/\astar. 
\citet{Balcan2020-gm} showed that approximating dual functions with simpler ones is useful for improving sample complexity bounds, which is similar to our idea in \Cref{sec:suboptimality_astar}. 
A difference is that while they construct simpler functions with a dynamic programming algorithm, we can use a known worst-case bound on the suboptimality of best-first search \citep{Valenzano2014-sw}. 
Lower bounds on the pseudo-dimension for graph-search algorithms have not been well studied. 

\paragraph{Heuristic search with learning.}
\citet{Eden2022-fl} theoretically studied how the average-case running time of \astar can be affected by the dimensions or bits of learned embeddings or labels of vertex features, based on which heuristic function values and computed. 
The sample complexity of learning heuristic functions, however, has not been studied. 

%In the last few years, many researchers have demonstrated practical effectiveness of designing heuristic functions with machine learning techniques, e.g., deep learning~\citep{Takahashi2019-il,Agostinelli2021-dy}, imitation learning \citep{Bhardwaj2017-lk,Pandy2021-wn}, and differentiable optimization \citep{Yonetani2021-wn}. 
%Heuristic function values learned with those methods are usually calculated via some embeddings or labels of vertex features. 
%Motivated by this consideration, 

\section{Preliminaries}\label{sec:problem_formulation}
We present the background on learning theory and our problem setting. 
In what follows, we let $\Ibb\prn*{\cdot}$ be a boolean function that returns $1$ if its argument is true and $0$ otherwise. 
We use $\Hcal\subseteq \Rcal^\Ycal$ to denote a class of functions that map $\Ycal$ to $\Rcal \subseteq \R$.
For any positive integer $m$, we let $[m] = \set*{1,\dots,m}$. 

\subsection{Backgound on learning theory}
The following \textit{pseudo-dimension} \citep{Pollard1984-zp} is a fundamental notion for quantifying the complexity of a class of real-valued functions. 

\begin{definition}%\cite[Definition 4.3]{Pollard1990-ap}
	Let $\Hcal \subseteq \R^\Ycal$ be a class of functions that map some domain $\Ycal$ to $\R$. 
	We say a set $\set*{y_1,\dots,y_N}\subseteq \Ycal$ is \textit{shattered} by $\Hcal$ if there exist target values, $t_1,\dots,t_N \in \R$, such that 
	\[
		\abs*{ \Set*{ \prn*{ \Ibb\prn*{h(y_1) \ge z_1}, \dots, \Ibb\prn*{h(y_N) \ge z_N} } }{ h \in \Hcal } } = 2^N.
	\]
	The \emph{pseudo-dimension} of $\Hcal$, denoted by $\Pdim(\Hcal)$, is the size of a largest set shattered by $\Hcal$.	
\end{definition}
If $\Hcal$ is a set of binary-valued functions that map $\Ycal$ to $\set*{0, 1}$,
the pseudo-dimension of $\Hcal$ coincides with the so-called \textit{VC-dimension}~\citep{Vapnik1971-me},
which is denoted by $\VCdim(\Hcal)$.

The following proposition enables us to obtain sample complexity bounds by evaluating the pseudo-dimension (see, e.g., \citep[Theorem 13.6]{Anthony1999-mm} and \citep[Theorem 11.8]{Mohri2018-zs}). 

\begin{proposition}\label{prop:complexity_bound}
	Let $H>0$, $\Hcal \subseteq {[0, H]}^\Ycal$, and $\Dcal$ be a distribution over $\Ycal$. 
	For any $\delta\in(0,1)$, with a probability of at least $1-\delta$ over the i.i.d.\ draw of $\set*{y_1,\dots,y_N} \sim \Dcal^N$, for all $h \in \Hcal$, it holds that
	\[
    \abs*{ \frac{1}{N}\sum_{i=1}^N h(y_i)  - \mathop{\E}_{y \sim \Dcal}\brc*{ h(y) } } = \Ord\prn*{H \sqrt{\frac{\Pdim(\Hcal) \lg \frac{N}{\Pdim(\Hcal)} + \lg \frac{1}{\delta}}{N}}}. 
	\] 
\end{proposition}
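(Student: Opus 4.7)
The plan is to follow the classical route that converts a pseudo-dimension bound into a uniform convergence guarantee: symmetrization via a ghost sample, a Sauer--Shelah argument on an induced binary-valued class, and finally a rescaling by $H$.

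First, I would reduce to a binary problem. Define the subgraph class
\[
\Gcal = \Set*{(y,z) \mapsto \Ibb\prn*{h(y) \ge z}}{h \in \Hcal}
\]
on the enlarged domain $\Ycal \times [0,H]$. By the definition of pseudo-dimension, $\VCdim(\Gcal) = \Pdim(\Hcal)$. The layer-cake identity $h(y) = \int_0^H \Ibb\prn*{h(y) \ge z}\,dz$ lets one transfer any empirical-process bound for $\Gcal$ to $\Hcal$, at the cost of one factor of $H$ out front (this is where $[0,H]$-valuedness enters quantitatively).

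Next, I would apply the ghost-sample symmetrization inequality to bound
\[
\Pr\brc*{\sup_{h \in \Hcal} \abs*{\tfrac{1}{N}\sum_{i=1}^N h(y_i) - \E\brc*{h(y)}} > \epsilon}
\]
by twice the analogous probability in which $\E\brc*{h(y)}$ is replaced by the empirical average over an independent ghost sample $y_1',\dots,y_N'$, valid whenever $\epsilon \gtrsim H/\sqrt{N}$. Introducing i.i.d.\ Rademacher signs $\sigma_i \in \set*{\pm 1}$ that swap $y_i$ with $y_i'$ then reduces the problem to controlling a Rademacher-type process over the $2N$ observed points. Conditional on those points, $\Gcal$ realises only finitely many $\pm 1$ labellings on a finite set of test pairs $(y_i,z)$; by Sauer--Shelah this number is at most $\prn*{\tfrac{2eN}{\Pdim(\Hcal)}}^{\Pdim(\Hcal)}$.

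Applying Hoeffding's inequality to each of these sign-sums (each with entries in $[-H,H]$) and taking a union bound over the Sauer--Shelah count yields a tail of the form
\[
\exp\!\prn*{\Pdim(\Hcal)\,\lg \tfrac{2eN}{\Pdim(\Hcal)} - \frac{c N \epsilon^2}{H^2}}
\]
for an absolute constant $c > 0$. Setting this equal to $\delta$ and solving for $\epsilon$ yields the stated bound of order $H\sqrt{(\Pdim(\Hcal)\,\lg(N/\Pdim(\Hcal)) + \lg(1/\delta))/N}$. The main obstacle is securing the refined logarithmic factor $\lg(N/\Pdim(\Hcal))$ rather than the cruder $\lg N$ inside the square root: this requires the sharp $(em/d)^d$ form of Sauer--Shelah (not the weaker $m^d$) together with a careful discretisation of the threshold variable $z$ at the finitely many values taken by $h(y_i)$ and $h(y_i')$, so that the effective complexity of $\Gcal$ on the combined $2N$-point sample is governed exactly by $\Pdim(\Hcal)$. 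This is the precise step at which Pollard's argument improves on a naive union bound, and where the textbook references cited by the authors do their work.
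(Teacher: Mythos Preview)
The paper does not prove this proposition at all; it simply states it with references to \citep[Theorem 13.6]{Anthony1999-mm} and \citep[Theorem 11.8]{Mohri2018-zs} as a known result. Your sketch is a faithful outline of the classical Pollard-style argument that those textbooks carry out (subgraph class, symmetrization, sharp Sauer--Shelah, Hoeffding plus union bound), so there is nothing to compare: you are supplying a proof where the authors defer to the literature, and your route is the standard one.
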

In other words, for any $\epsilon>0$, $N = \Omega\prn*{\frac{H^2}{\epsilon^2} \prn*{\Pdim(\Hcal) \lg\frac{H}{\epsilon} + \lg \frac{1}{\delta}}}$ sampled instances are sufficient to ensure that with a probability of at least $1-\delta$, for all $h \in \Hcal$, the difference between the empirical average and the expectation over an unknown disrtibution $\Dcal$ is at most $\epsilon$.

\subsection{Problem formulation}\label{subsec:problem}
We describe path-finding instances, \gbfs/\astar algorithm, and performance measures considered in this paper. 

\paragraph{Path-finding instances.}
We consider solving randomly generated path-finding instances repetitively. 
Let $\xinst = (V, E, \set*{w_e}_{e \in E}, s, t)$ be a path-finding instance, where $(V, E)$ is a simple directed graph with $n$ vertices, $\set*{w_e}_{e \in E}$ is a set of non-negative edge weights (sometimes called costs), $s\in V$ is a start vertex, and $t\in V$ is a goal vertex. 
We let $\Pi$ be a class of possible instances. 
Each instance $\xinst\in\Pi$ is drawn from an unknown distribution $\Dcal$ over $\Pi$. We impose the following assumption on $\Pi$. 

\begin{assumption}\label{assump:feasible}
	For all $\xinst \in \Pi$, the vertex set $V$ and the goal node $t$ are identical, and there always exists at least one directed path from $s \neq t$ to $t$, i.e., every instance $\xinst \in \Pi$ is feasible. 
\end{assumption}

Fixing $V$ is necessary for evaluating the pseudo-dimension in terms of $n = |V|$. 
Note that we can deal with the case where some instances in $\Pi$ are defined on vertex subsets $V' \subseteq V$ by removing edges adjacent to $V \setminus V'$. 
The feasibility assumption is needed to ensure that \gbfs/\astar always returns a solution, and $s\neq t$ simply rules out the trivial case where the empty set is optimal.  
In \Cref{app_sec:additional}, we discuss how to extend our results to the case where $t$ can change depending on instances.

\paragraph{Algorithm description.}
We sketch algorithmic procedures that are common to both \gbfs and \astar (see \Cref{alg:gbfs,alg:astar} for details, respectively).  
Let $A_\rhob$ be a \gbfs/\astar algorithm, which is parameterized by heuristic function values $\rhob \in \R^n$. 
Given an instance $\xinst\in\Pi$, $A_\rhob$ starts from $s$ and iteratively builds a set of candidate paths. 
These paths are maintained by $\open$ and $\closed$ lists, together with pointers $\pointer(\cdot)$ to parent vertices. 
The $\open$ list contains vertices to be explored, and the $\closed$ list consists of vertices that have been explored. 
In each iteration, we select a vertex $v$ from $\open$, expand $v$, and move $v$ from $\open$ to $\closed$. 

Heuristic function values $\rhob$ are used when selecting vertices. 
For each $v \in V$, the corresponding entry in $\rhob$, denoted by $\rho_v$, represents an estimated shortest-path distance from $v$ to $t$. (Although heuristic function values are usually denoted by $h(v)$, we here use $\rho_v$ for convenience.) 
In each iteration, we select a vertex with the smallest \textit{score}, which is defined based on $\rhob$ as detailed later. 
We impose the following assumption on the vertex selection step. 

\begin{assumption}\label{assump:tie_break}
	Define an arbitrary strict toral order on $V$; for example, we label elements in $V$ by $v_1,\dots, v_n$ and define a total order $v_1<\dots<v_n$. 
	When selecting a vertex with the smallest score, we break ties, if any, in favor of the smallest vertex with respect to the total order. 
\end{assumption}

If we allow $A_\rhob$ to break ties arbitrarily, its behavior becomes too complex to obtain meaningful bounds on the pseudo-dimension. 
\Cref{assump:tie_break} is a natural rule to exclude such troublesome cases.

\paragraph{Performance measure.}
Let $A_\rhob$ be \gbfs/\astar with parameters $\rhob \in \R^n$. 
We measure performance of $A_\rhob$ on $\xinst \in \Pi$ with a utility function $u$. 
We assume $u$ to satisfy the following condition. 

\begin{assumption}\label{assump:utility}
	Let $H > 0$. 
	A utility function $u$ takes $x$ and a series of all $\emph{\open}$, $\emph{\closed}$, and $\emph{\pointer}(\cdot)$ generated during the execution of $A_\rhob$ on $\xinst \in \Pi$ as input, and returns a scalar value in $[0, H]$. 	
\end{assumption}

We sometimes use $A_\rhob$ to represent the series of $\open$ and $\closed$ lists and pointers generated by $A_\rhob$. 
Note that $u$ meeting \Cref{assump:utility} can measure various kinds of performance. 
For example, since the pointers indicate an $s$--$t$ path returned by $A_\rhob$, $u$ can represent its cost. 
Moreover, since the series of $\open$ and $\closed$ lists maintain all search states, $u$ can represent the time and space complexity of $A_\rhob$. 
We let $u_\rhob: \Pi \to [0, H]$ denote the utility function that returns the performance of $A_\rhob$ on any $\xinst\in\Pi$, and define a class of such functions as $\Ucal = \Set*{u_\rhob: \Pi \to [0, H]}{\rhob\in \R^n}$. 
The upper bound, $H$, is necessary to obtain sample complexity bounds with \Cref{prop:complexity_bound}. 
Setting such an upper bound is usual in practice. 
For example, if $u$ measures the running time, $H$ represents a time-out deadline. 

\paragraph{Generalization guarantees on performance.}
Given the above setting, we want to learn $\hat\rhob$ values that attain an optimal $\E_{\xinst\sim\Dcal}[u_{\hat\rhob}(\xinst)]$ value, where available information consists of sampled instances $\xinst_1,\dots,\xinst_N$ and $u_\rhob(\xinst_1), \dots, u_\rhob(\xinst_N)$ values for any $\rhob\in\R^n$. 
To obtain generalization gaurantees on the performance of $A_{\hat\rhob}$, we bound $\abs{\frac{1}{N}\sum_{i=1}^N u_\rhob(\xinst_i) - \E_{\xinst\sim \Dcal}[u_\rhob(x)]}$ uniformly for all $\rhob\in \R^n$. 
Note that the uniform bound offers performance guarantees that are independent of learning procedures, e.g., manual or automated (without being uniform, learned $\hat\rhob$ may be overfitting sampled instances). 
As in \Cref{prop:complexity_bound}, to bound the sample complexity of learning $\rhob$ values, we need to evaluate the pseudo-dimension of $\Ucal$, denoted by $\Pdim(\Ucal)$, which is the main subject of this study.

\paragraph{Remarks on heuristic functions.} 
While we allow heuristic function values $\rhob$ to be any point in $\R^n$, the range of heuristic functions may be restricted to some subspace of $\R^n$. 
Note that our upper bounds are applicable to such situations since restricting the space of possible $\rhob$ values does not increase $\Pdim(\Ucal)$. 
Meanwhile, such restriction may be useful for improving the upper bounds on $\Pdim(\Ucal)$; exploring this direction is left for future work. 
Also, our setting cannot deal with heuristic functions that take some instance-dependent features as input. 
To study such cases, we need more analysis that is specific to heuristic function models, which goes beyond the scope of this paper. Thus, we leave this for future work. 
Note that our setting still includes important heuristic function models on fixed vertex sets. 
For example, we can set $\rhob$ using learned distances to landmarks \citep{Goldberg2005-lj}, 
or we can let $\rhob$ be distances measured on some metric space by learning metric embeddings of vertices \citep{You2019-hs}.

\section{Upper bounds on the pseudo-dimension}\label{sec:upper_bound}

We present details of \gbfs and \astar and upper bounds on the pseudo-dimensions of $\Ucal$. 
In this section, we suppose that vertices in $V$ are labeled by $v_1,\dots,v_n$ as in \Cref{assump:tie_break}. 

\begin{algorithm}[tb]
	\caption{\gbfs with heuristic function values $\rhob$}
	\label{alg:gbfs}
	\begin{algorithmic}[1]
		\State $\open = \set*{s}$, $\closed = \emptyset$, and $\pointer(s) = \mathrm{None}$.
		\While{$\open$ is not empty}
    \State $v \gets \argmin\Set*{\rho_{v'}}{v' \in \open}$. {\label[step]{step:gbfs_argmin}}
      \Comment{Break ties as in \Cref{assump:tie_break}.}
    \For {each child $\vc$ of $v$}
		\If {$\vc = t$}
		\State \Return $s$--$t$ path by tracing pointers $\pointer(\cdot)$, where $\pointer(t) = v$.
		\EndIf
    \If {$\vc \notin \open \cup \closed$}
    \State $\pointer(\vc) \gets v$ and $\open \gets \open \cup \set*{\vc}$.
    \EndIf
    \EndFor
		\State Move $v$ from $\open$ to $\closed$.
		\EndWhile
	\end{algorithmic}
\end{algorithm}

\subsection{Greedy best-first search}

\Cref{alg:gbfs} shows the details of \gbfs $A_\rhob$ with heuristic function values $\rhob \in \R^n$. 
When selecting vertices in \Cref{step:gbfs_argmin}, the scores are determined only by $\rhob$. 
This implies an obvious but important fact. 

\begin{lemma}\label{lem:gbfs_total_order}
	Let $\rhob, \rhob' \in \R^n$ be a pair of heuristic function values with an identical total order up to ties on their entries, i.e., $\Ibb(\rho_{v_i} \le \rho_{v_j}) = \Ibb(\rho'_{v_i} \le \rho'_{v_j})$ for all $i,j\in [n]$ such that $i < j$. 
	Then, we have $u_\rhob(\xinst) = u_{\rhob'}(\xinst)$ for all $\xinst\in\Pi$. 
\end{lemma}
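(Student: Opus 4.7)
The plan is to run \gbfs with parameters $\rhob$ and $\rhob'$ in lockstep on the same instance $\xinst$ and show by induction on the iteration count that the two executions maintain identical $\open$, $\closed$, and $\pointer(\cdot)$ data structures. Since the only place $\rhob$ enters \Cref{alg:gbfs} is \Cref{step:gbfs_argmin}, it suffices to argue that both runs select the same vertex $v$ in every iteration; every other operation is determined by the graph structure and the current data structures, which by the inductive hypothesis coincide.

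For the base case, both executions start with $\open = \set{s}$, $\closed = \emptyset$, and $\pointer(s) = \mathrm{None}$. For the inductive step, assume the two executions have produced identical $\open$, $\closed$, and $\pointer(\cdot)$ at the start of some iteration. The selected vertex is $v = \argmin_{v' \in \open} \rho_{v'}$ for the first run and $v' = \argmin_{v' \in \open} \rho'_{v'}$ for the second. Because the hypothesis $\Ibb(\rho_{v_i} \le \rho_{v_j}) = \Ibb(\rho'_{v_i} \le \rho'_{v_j})$ holds for all pairs $i<j$, the two vectors induce exactly the same preorder on $V$: $\rho_a < \rho_b$ iff $\rho'_a < \rho'_b$, and $\rho_a = \rho_b$ iff $\rho'_a = \rho'_b$. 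Hence the set of minimizers $\argmin_{v' \in \open} \rho_{v'}$ and $\argmin_{v' \in \open} \rho'_{v'}$ coincide. Applying the fixed tie-breaking rule of \Cref{assump:tie_break} to this common set selects the same vertex in both executions.

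Once the selected vertices agree, the body of the \texttt{for} loop expanding children of $v$ depends only on the graph, on the current $\open \cup \closed$, and on the goal $t$; by the inductive hypothesis these are the same in both runs, so both executions perform identical updates to $\pointer(\cdot)$ and $\open$, and both move $v$ from $\open$ to $\closed$ in the same way. Either both executions terminate in this iteration by discovering a child equal to $t$ (and then the returned $s$--$t$ path is reconstructed from identical pointers), or neither does and they proceed to the next iteration with identical state. By induction, the sequences of $\open$, $\closed$, and $\pointer(\cdot)$ produced by $A_\rhob$ and $A_{\rhob'}$ on $\xinst$ are identical, so by \Cref{assump:utility} we conclude $u_\rhob(\xinst) = u_{\rhob'}(\xinst)$.

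There is essentially no hard step: the only subtlety is verifying that the equal-score case is handled consistently, which follows because the given condition on indicator functions forces equalities as well as strict inequalities to be preserved between $\rhob$ and $\rhob'$, so \Cref{assump:tie_break} resolves ties identically in both runs.
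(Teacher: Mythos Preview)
Your proof is correct in spirit and follows the same approach as the paper: both argue that $\rhob$ affects only \Cref{step:gbfs_argmin}, and that the hypothesis together with \Cref{assump:tie_break} forces the same vertex to be selected in every iteration, so the two executions coincide. Your lockstep induction simply makes explicit what the paper leaves terse.

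There is one small inaccuracy worth flagging. You assert that the hypothesis forces $\rhob$ and $\rhob'$ to induce the same preorder on $V$ (in particular, that equalities are preserved) and hence that the argmin sets coincide. This is not quite right: the condition is stated only for pairs $i<j$ and only for the relation $\le$, so for instance $\rho_{v_1}=1,\ \rho_{v_2}=2$ and $\rho'_{v_1}=\rho'_{v_2}=1$ satisfy the hypothesis, yet the argmin sets over $\set{v_1,v_2}$ differ. The conclusion you need still holds, but for a slightly different reason: for $i<j$, after the tie-breaking of \Cref{assump:tie_break}, $v_i$ is preferred to $v_j$ precisely when $\rho_{v_i}\le\rho_{v_j}$, and this is exactly the relation the hypothesis preserves. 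So the \emph{selected} vertex agrees even when the raw argmin sets do not. With this small correction your argument goes through and is essentially the paper's proof written out in more detail.
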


\begin{proof}
	For any $\xinst\in\Pi$, if $\rhob$ and $\rhob'$ have an identical strict total order on their entries, vertices selected in \Cref{step:gbfs_argmin} are the same in each iteration of $A_\rhob$ and $A_{\rhob'}$. 
	Since this is the only step $\rhob$ and $\rhob'$ can affect, we have $A_\rhob = A_{\rhob'}$ for all $\xinst\in\Pi$, hence $u_\rhob(\xinst) = u_{\rhob'}(\xinst)$. Moreover, this holds even if $\rhob$ and/or $\rhob'$ have ties on their entries because of \Cref{assump:tie_break}. 
	That is, the total order uniquely determines a vertex selected in \Cref{step:gbfs_argmin} even in case of ties. 
	Therefore, the statement holds. 
\end{proof}

From \Cref{lem:gbfs_total_order}, the behavior of \gbfs is uniquely determined once a total order on $\set*{\rho_v}_{v\in V}$ is fixed. 
Thus, for any $\xinst \in \Pi$, the number of distinct  $u_\rhob(\xinst)$ values is at most $n!$, the number of permutations of $\set*{\rho_v}_{v\in V}$. 
This fact enables us to obtain an $\Ord(n \lg n)$ upper bound on the pseudo-dimension of $\Ucal$. 

\begin{theorem}\label{thm:gbfs_upper}
	For \gbfs $A_\rhob$ with parameters $\rhob \in \R^n$, it holds that $\Pdim(\Ucal) = \Ord(n \lg n)$. 
\end{theorem}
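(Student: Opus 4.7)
The plan is to combine \Cref{lem:gbfs_total_order} with a straightforward counting argument on realizable labelings. Since \Cref{lem:gbfs_total_order} tells us that $u_\rhob(\xinst)$ is determined entirely by the strict total order that $\rhob$ induces on $V$ (with ties resolved as in \Cref{assump:tie_break}, which only turns a partial order with ties into a specific total order), the map $\rhob \mapsto u_\rhob$ factors through the finite set of total orders on $V$.

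First, I would make this precise by noting that the total number of strict total orders on the $n$-element set $V$ is $n!$; consequently, as $\rhob$ ranges over $\R^n$, the function $u_\rhob \in \Ucal$ takes at most $n!$ distinct values. In particular, for any finite collection of instances $\xinst_1,\dots,\xinst_N \in \Pi$, the tuple $(u_\rhob(\xinst_1),\dots,u_\rhob(\xinst_N))$ takes at most $n!$ distinct values as $\rhob$ varies over $\R^n$.

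Next, I would plug this into the definition of pseudo-dimension. Suppose $\set*{\xinst_1,\dots,\xinst_N}$ is shattered by $\Ucal$ with witnesses $z_1,\dots,z_N \in \R$. By definition, the induced sign vectors
\[
  \prn*{\Ibb\prn*{u_\rhob(\xinst_1) \ge z_1},\dots,\Ibb\prn*{u_\rhob(\xinst_N) \ge z_N}}, \qquad \rhob \in \R^n,
\]
must realize all $2^N$ elements of $\set{0,1}^N$. But each such sign vector is a function of the tuple $(u_\rhob(\xinst_1),\dots,u_\rhob(\xinst_N))$, which, by the previous paragraph, takes at most $n!$ values. Hence $2^N \le n!$, which by Stirling's approximation yields $N \le \lg(n!) = \Ord(n \lg n)$, proving the claimed bound.

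No significant obstacle is expected: the whole argument rests on \Cref{lem:gbfs_total_order}, which has already reduced the seemingly continuous parameter space $\R^n$ to the combinatorial object of a total order on $V$. The only minor care needed is to observe that ties do not inflate the count of behaviors, but this is immediate from \Cref{assump:tie_break}: ties are broken by a fixed total order on $V$, so every $\rhob$ still induces a unique strict total order governing \Cref{step:gbfs_argmin}.
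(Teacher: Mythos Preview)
Your proposal is correct and is essentially identical to the paper's own proof: both invoke \Cref{lem:gbfs_total_order} to argue that $u_\rhob$ depends only on the total order induced by $\rhob$, so that the tuple $(u_\rhob(\xinst_1),\dots,u_\rhob(\xinst_N))$ takes at most $n!$ values, whence $2^N \le n!$ and $\Pdim(\Ucal)=\Ord(n\lg n)$. The paper phrases this as a partition of $\R^n$ into $n!$ regions rather than a factoring of the map $\rhob\mapsto u_\rhob$, but the argument is the same.
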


\begin{proof}
	\Cref{lem:gbfs_total_order} implies that we can parition $\R^n$ into $n!$ regions, $\Pcal_1, \Pcal_2,\dots$, so that for every $\Pcal_i$, any pair of $\rhob, \rhob' \in \Pcal_i$ satisfies $u_\rhob(\xinst) = u_{\rhob'}(\xinst)$ for all $\xinst \in \Pi$. 
	Note that the construction of the regions, $\Pcal_1, \Pcal_2,\dots$, does not depend on $\xinst$. 
	Thus, given any $N$ instances $\xinst_1,\dots,\xinst_N$, even if $\rhob$ moves over whole $\R^n$, 
	the number of distinct tuples of form $(u_\rhob(\xinst_1),\dots,u_\rhob(\xinst_N))$ is at most $n!$. 
	To shatter $N$ instances, $n! \ge 2^N$ must hold. Solving this for the largest $N$ yields $\Pdim(\Ucal) = \Ord(n \lg n)$. 
\end{proof}

\subsection{\astar search}\label{subsec:astar_upper}

\begin{algorithm}[tb]
	\caption{\astar with heuristic function values $\rhob$}
	\label{alg:astar}
	\begin{algorithmic}[1]
		\State $\open = \set*{s}$, $\closed = \emptyset$, $\pointer(s) = \mathrm{None}$, and $g_s = 0$.
		\While{$\open$ is not empty}
    \State $v \gets \argmin\Set*{g_{v'} + \rho_{v'}}{v' \in \open}$. {\label[step]{step:astar_argmin}}
      \Comment{Break ties as in \Cref{assump:tie_break}.}
    \If {$v = t$} 
		\State \Return $s$--$t$ path by tracing pointers $\pointer(\cdot)$.
    \EndIf
    \For {each child $\vc$ of $v$}
    \State $g_\mathrm{new} \gets g_v + w_{(v, \vc)}$.
    \If {$\vc \notin \open \cup \closed$}
    \State $g_{\vc} \gets g_\mathrm{new}$, $\pointer(\vc) \gets v$, and $\open \gets \open \cup \set*{\vc}$.
    \ElsIf {$\vc \in \open$ and $ g_\mathrm{new} < g_{\vc}$}
    \State $g_{\vc} \gets g_\mathrm{new}$ and $\pointer(\vc) \gets v$.
    \ElsIf {$\vc \in \closed$ and $g_\mathrm{new} < g_{\vc}$} {\label[step]{step:reopening_if}}
			\Comment{\Cref{step:reopening_if,step:parent_pointer_updating,step:reopening_o} are for reopening.}
    \State $g_{\vc} \gets g_\mathrm{new}$ and $\pointer(\vc) \gets v$. {\label[step]{step:parent_pointer_updating}}
      %\Comment{Updating parent pointer.}
    \State Move $\vc$ from $\closed$ to $\open$. {\label[step]{step:reopening_o}}
      %\Comment{Reopening.}
    \EndIf
    \EndFor
		\State Move $v$ from $\open$ to $\closed$.
		\EndWhile
	\end{algorithmic}
\end{algorithm}

\Cref{alg:astar} is the details of \astar. 
As with \gbfs, $\rhob$ only affects the vertex selection step (\Cref{step:astar_argmin}). 
However, unlike \gbfs, the scores, $g_v + \rho_v$, involve not only $\rhob$ but also $\set*{g_v}_{v\in V}$. 
Each $g_v$ is called a $g$-cost and maintains a cost of some path from $s$ to $v$. 
As in \Cref{alg:astar}, when $v$ is expanded and a shorter path to $\vc$ is found, whose cost is denoted by $g_{\textrm{new}}$, we update the $g_{\vc}$ value. 
Thus, each $g_{v}$ always gives an upper bound on the shortest-path distance from $s$ to $v$. 
For each $v \in V$, there are at most $\sum_{k=0}^{n-2}k! \le (n-1)!$ simple paths connecting $s$ to $v$, and thus $g_v$ can take at most $(n-1)!$ distinct values. 
We denote the set of those distinct values by $\Gcal_v$, 
and define $\Gcal_V = \Set*{(v, g_v)}{v\in V, g_v \in \Gcal_v}$ as the set of all pairs of a vertex and its possible $g$-cost. It holds that $|\Gcal_V| \le n\times (n-1)! = n!$. 

Note that once $\xinst \in \Pi$ is fixed, $\Gcal_v$ for $v\in V$ and $\Gcal_V$ are uniquely determined. 
To emphasize this fact, we sometimes use notation with references to $\xinst$: 
$g_v(\xinst)$, $\Gcal_v(\xinst)$, and $\Gcal_V(\xinst)$.  
As with the case of \gbfs (\Cref{lem:gbfs_total_order}), we can define a total order on the scores to determine the behavior of \astar uniquely. 

\begin{lemma}\label{lem:astar_total_order}
	Fix any instance $\xinst \in \Pi$. 
	Let $\rhob, \rhob' \in \R^n$ be a pair of heuristic function values such that total orders on 
	the sets of all possible scores, 
	$\Set*{g_v(\xinst) + \rho_v}{(v, g_v(\xinst)) \in \Gcal_V(\xinst)}$ and 
	$\Set*{g_v(\xinst) + \rho'_v}{(v, g_v(\xinst)) \in \Gcal_V(\xinst)}$, are identical up to ties.  
	Then, it holds that $u_\rhob(\xinst) = u_{\rhob'}(\xinst)$. 
\end{lemma}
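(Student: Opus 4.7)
The plan is to mirror the proof of \Cref{lem:gbfs_total_order}, but with an extra layer accounting for the fact that scores in \astar depend on both $\rhob$ and the running $g$-costs. Fix $\xinst \in \Pi$. I would induct on the iteration count $k$ of the while loop in \Cref{alg:astar}, showing that the runs of $A_\rhob$ and $A_{\rhob'}$ maintain identical $\open$ lists, $\closed$ lists, parent pointers $\pointer(\cdot)$, and $g$-costs at the start of iteration $k$. Since $\rhob$ (resp.\ $\rhob'$) influences the algorithm only through the argmin in \Cref{step:astar_argmin}, demonstrating that the same vertex is selected there in every iteration suffices to conclude $A_\rhob = A_{\rhob'}$ on $\xinst$ and hence $u_\rhob(\xinst) = u_{\rhob'}(\xinst)$.

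The base case follows from the common initialization. For the inductive step, assume the two runs are synchronized entering iteration $k$; every $v' \in \open$ then carries a common $g$-cost $g_{v'}$. The vertex selected by $A_\rhob$ is the minimizer of $g_{v'} + \rho_{v'}$ over $v' \in \open$, and analogously for $A_{\rhob'}$. The key observation is that each candidate pair $(v', g_{v'})$ lies in $\Gcal_V(\xinst)$, so the argmin is taken over a subset of the universe $\Set*{g + \rho_v}{(v,g) \in \Gcal_V(\xinst)}$, on which the total orders induced by $\rhob$ and $\rhob'$ coincide up to ties by hypothesis. Restricted to the $\open$-indexed subset the orders still coincide, so the deterministic tie-breaking rule from \Cref{assump:tie_break} picks the same vertex in both runs, even when ties are present. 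After this common selection, every subsequent operation inside the for-loop --- computing $g_{\mathrm{new}}$, choosing between the conditional branches, updating $g_{\vc}$ and $\pointer(\vc)$, and moving vertices between $\open$ and $\closed$ (including reopening via \Cref{step:parent_pointer_updating}) --- is driven entirely by edge weights and current $g$-costs, independently of $\rhob$. Hence the states remain synchronized entering iteration $k+1$.

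The main obstacle is justifying that $\Gcal_V(\xinst)$ is indeed rich enough to contain every pair $(v', g_{v'})$ that can appear in $\open$ during \astar under \emph{any} $\rhob$, so that the hypothesis on orders over $\Gcal_V(\xinst)$ actually constrains the orders seen at runtime. I would verify this by noting that every $g$-cost assigned in \Cref{alg:astar}, whether at first insertion or via the reopening update, has the form $g_u + w_{(u, v)}$ for the current parent $u$. Unrolling the recursion shows that each $g_v$ equals the cost of some $s$-$v$ walk traced through the pointer structure; because updates fire only on strict decreases and edge weights are non-negative, any such walk can be shortcut to a simple $s$-$v$ path of no greater cost without inducing a distinct reachable value, so taking $\Gcal_v$ to be the set of costs of simple $s$-$v$ paths (matching the $(n-1)!$ count stated just before the lemma) is sufficient. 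Combined with the induction and \Cref{assump:tie_break}, this yields $u_\rhob(\xinst) = u_{\rhob'}(\xinst)$.
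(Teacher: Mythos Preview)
Your approach is correct and is essentially the paper's: both argue (the paper implicitly, you by explicit induction on iterations) that \Cref{step:astar_argmin} selects the same vertex in every iteration, with ties resolved via \Cref{assump:tie_break}; the paper's additional remark that two scores $g_v+\rho_v$ and $g'_v+\rho_v$ of the \emph{same} vertex can never tie is unneeded in your per-iteration framing, since $\open$ carries a single $g$-cost per vertex.

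The only soft spot is your last paragraph. ``Shortcutting'' a walk yields a simple path of cost \emph{at most} the walk's cost, not equal to it, so this by itself does not place the runtime $g_v$ in $\Gcal_v$. The correct argument uses exactly the two ingredients you name: trace back the chain of assignments $s=u_0,u_1,\dots,u_k=v$ that produced the current $g_v$; if some $u_i=u_j$ with $i<j$, then the update at step $j$ must have been a strict decrease of $g_{u_j}$ below its value just before, which is $\le$ its value at step $i$, while non-negative edge weights force the partial sum along $u_i,\dots,u_j$ to be $\ge 0$ and hence $g_{u_j}$ at step $j$ to be $\ge$ its value at step $i$---a contradiction. So the traced walk is already simple. (The paper simply asserts $g_v\in\Gcal_v$ without proof, so identifying this as something to verify already puts you ahead.)
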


\begin{proof}
	If the two sets of scores have an identical strict total order, we select the same vertex in \Cref{step:astar_argmin} in each iteration of $A_\rhob$ and $A_{\rhob'}$. Thus, we have $A_\rhob = A_{\rhob'}$ for any fixed $\xinst$, implying $u_\rhob(\xinst) = u_{\rhob'}(\xinst)$. 
	We show that this holds even in the presence of ties by using \Cref{assump:tie_break}. 
	First, any two scores of the same vertices, $g_v(\xinst) +\rho_v$ and $g_v'(\xinst) +\rho_v$, never have ties since $\Gcal_v$ consists of distinct $g$-costs. 
	Next, if $g_{v_i}(\xinst) +\rho_{v_i} = g_{v_j}(\xinst) +\rho_{v_j}$ holds for some $i < j$, we always prefer $v_i$ to $v_j$ in \Cref{step:astar_argmin} due to \Cref{assump:tie_break}. 
	Therefore, even in the presence of ties, we select a vertex in \Cref{step:astar_argmin} as if the set of scores has a strict total order. 
	Thus, if $\rhob$ and $\rhob'$ induce the same total order up to ties on the sets of possible scores, it holds that $u_\rhob(\xinst) = u_{\rhob'}(\xinst)$. 
\end{proof}

By using \Cref{lem:astar_total_order}, we can obtain an $\Ord(n^2\lg n)$ upper bound on the pseudo-dimension of $\Ucal$. 

\begin{theorem}\label{thm:astar_upper}
	For \astar $A_\rhob$ with parameters $\rhob \in \R^n$, it holds that $\Pdim(\Ucal) = \Ord(n^2 \lg n)$. 
\end{theorem}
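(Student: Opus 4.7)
The plan is to mimic the approach used for \gbfs in \Cref{thm:gbfs_upper}, but with \Cref{lem:astar_total_order} playing the role of \Cref{lem:gbfs_total_order}. The key observation is that the lemma lets us partition $\R^n$ into regions on which $u_\rhob(\xinst)$ is constant, but unlike in \gbfs these regions now depend on $\xinst$ (through the sets $\Gcal_V(\xinst)$ of possible $g$-costs), so we must carry out the region counting more carefully via hyperplane arrangements rather than by a crude ``at most $n!$ orderings'' argument.

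Concretely, I would fix an instance $\xinst \in \Pi$ and consider, for every pair of distinct elements $(v, g_v), (v', g_{v'}) \in \Gcal_V(\xinst)$ with $v \neq v'$, the hyperplane
\[
	H_{v,v',g_v,g_{v'}} = \Set*{\rhob \in \R^n}{\rho_v - \rho_{v'} = g_{v'} - g_v}.
\]
(If $v = v'$, the sign of the corresponding score difference depends only on $g_v - g_{v'}$ and not on $\rhob$, so no hyperplane is needed.) Within any connected region of the complement of this arrangement, the strict total order on the scores $\set{g_v + \rho_v}_{(v,g_v)\in\Gcal_V(\xinst)}$ is constant, and the tie-breaking rule in \Cref{assump:tie_break} is determined solely by the labels on $V$. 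Hence by \Cref{lem:astar_total_order}, $u_\rhob(\xinst)$ is constant on each region. Since $|\Gcal_V(\xinst)| \le n\cdot(n-1)! = n!$, the arrangement consists of at most $(n!)^2$ hyperplanes in $\R^n$.

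Given any $N$ instances $\xinst_1,\dots,\xinst_N$, I would take the union of the arrangements for all of them, yielding at most $M \le N(n!)^2$ hyperplanes in $\R^n$. Using the standard bound that an arrangement of $M$ hyperplanes in $\R^n$ has at most $\Ord(M^n)$ regions, and noting that the tuple $(u_\rhob(\xinst_1),\dots,u_\rhob(\xinst_N))$ is constant on each region, the number of distinct such tuples attainable as $\rhob$ ranges over $\R^n$ is at most $\Ord((N(n!)^2)^n)$. Shattering $N$ instances requires $2^N \le \Ord((N(n!)^2)^n)$, and taking logarithms together with $\lg(n!) = \Ord(n \lg n)$ gives $N \le n \lg N + \Ord(n^2 \lg n)$, which solves to $N = \Ord(n^2 \lg n)$.

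The main obstacle I anticipate is the bookkeeping around ties and the subtle point that, although the hyperplanes depend on $\xinst$ through $\Gcal_V(\xinst)$, their number is controlled by the instance-independent upper bound $(n!)^2$ so that the union over $N$ instances is still of size $\Ord(N(n!)^2)$; handling both the $v=v'$ degenerate case and the tie-breaking via \Cref{assump:tie_break} ensures \Cref{lem:astar_total_order} applies uniformly across each open region of the arrangement.
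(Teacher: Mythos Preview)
Your proposal is correct and follows essentially the same approach as the paper: both fix $N$ instances, introduce the hyperplanes $g_{v_i}(\xinst_k)+\rho_{v_i}=g_{v_j}(\xinst_k)+\rho_{v_j}$ (at most $N\binom{n!}{2}$ of them, versus your $N(n!)^2$), invoke \Cref{lem:astar_total_order} to conclude that $u_\rhob$ is constant on each region of the resulting arrangement, apply the $\Ord(M^n)$ bound on the number of regions, and solve $2^N\le\Ord((N(n!)^2)^n)$ for $N$. The only cosmetic difference is that the paper explicitly counts faces of all dimensions (via its footnote) to cover points lying on the hyperplanes, whereas you appeal to tie-breaking on open regions; both treatments are adequate.
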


\begin{proof}
	As with the proof of \Cref{thm:gbfs_upper}, we partition $\R^n$ into some regions so that in each region, the behavior of \astar is unique. 
	Unlike the case of \gbfs, boundaries of such regions change over $N$ instances. 
	To deal with this situation, we use a geometric fact: 
	for $m \ge n \ge 1$, $m$ hyperplanes partition $\R^n$ into $\Ord({(\e m)}^n)$ regions.\footnote{Even if some regions degenerate, from \citep[Theorem 28.1.1]{Halperin2017-lp} and \citep[Proposition A2.1]{Blumer1989-cj}, the number of all $d$-dimensional regions for $d=0,\dots,n$ is $\sum_{d=0}^n \sum_{i=0}^d\binom{n - i}{d - i}\binom{m}{n - i} \le 2{(\e m)}^n$. 
	The fact has a close connection to Sauer's lemma \citep{Sauer1972-pw} (see \citep{Gartner1994-qi}). In this sense, our analysis is in a similar spirit to the general framework of \citep{Balcan2021-jv}. 
	}

	Fix a tuple of any $N$ instances $(\xinst_1,\dots,\xinst_N)$. 
	We consider hyperplanes in $\R^n$ of form $g_{v_i}(\xinst_k) + \rho_{v_i} = g_{v_j}(\xinst_k) + \rho_{v_j}$ for all $k\in[N]$ and all pairs of $(v_i,g_{v_i}(\xinst_k)), (v_j,g_{v_j}(\xinst_k)) \in \Gcal_V$ such that $i\neq j$. 
	These hyperplanes partition $\R^n$ into some regions, $\Pcal_1,\Pcal_2,\dots$, so that the following condition holds: 
	for every $\Pcal_i$, any $\rhob, \rhob' \in \Pcal_i$ have the same total order on $\Set*{g_v(\xinst_k) + \rho_v}{(v, g_v(\xinst_k)) \in \Gcal_V(\xinst)}$ and $\Set*{g_v(\xinst_k) + \rho'_v}{(v, g_v(\xinst_k)) \in \Gcal_V(\xinst_k)}$ up to ties for all $k \in [N]$, which implies $u_\rhob(\xinst_k) = u_{\rhob'}(\xinst_k)$ for all $k \in [N]$ due to \Cref{lem:astar_total_order}. 
	That is, for every $k \in [N]$, if we see $u_\rhob(\xinst_k)$ as a function of $\rhob$, it is piecewise constant where pieces are given by $\Pcal_1,\Pcal_2,\dots$. 
	Therefore, when $\rhob$ moves over whole $\R^n$, 
	the number of distinct tuples of form $\prn*{u_\rhob(\xinst_1), \dots, u_\rhob({\xinst_N})}$ is at most the number of the pieces. 
	Note that the pieces are generated by partitioning $\R^n$ with $\sum_{k\in[N]} \binom{|\Gcal_V(\xinst_k)|}{2} \le N \binom{n!}{2}$ hyperplanes, which means there are at most $\Ord\prn*{\prn*{\e N \binom{n!}{2}}^n}$ pieces. 
	To shatter $N$ instances, $\Ord\prn*{\prn*{\e N \binom{n!}{2}}^n} \ge 2^N$ is necessary. 
	Solving this for the largest $N$ yields $\Pdim(\Ucal) = \Ord(n^2 \lg n)$.  
\end{proof}

Compared with \gbfs, the additional $n$ factor comes from the bound of $(n-1)!$ on $|\Gcal_v|$. 
This bound may seem too pessimistic, but it is almost tight in some cases, as implied by the following example. 

\begin{example}\label{exmp:large_Gv}
	Let $(V, E)$ be a complete graph with edges labeled as $\set*{e_1,\dots,e_{|E|}}$. 
	Set each edge weight $w_{e_i}$ to $2^{i-1}$ for $i\in[|E|]$. 
	Considering the binary representation of the edge weights, the costs of all simple $s$--$v$ paths are mutually different for $v\in V$, which implies $|\Gcal_v| = \sum_{k=0}^{n-2}k! \ge (n-2)!$. 	
\end{example}

This example suggests that improving the $\Ord(n^2\lg n)$ bound is not straightforward. 
Under some realistic assumptions, however, we can improve it by deriving smaller upper bounds on $|\Gcal_v|$. 

First, if the maximum degree of vertices is always bounded, we can obtain the following bound.

\begin{theorem}
	Assume that the maximum out-degrees of directed graphs $(V, E)$ of all instances in $\Pi$ are upper bounded by $d$. Then, it holds that $\Pdim(\Ucal) = \Ord(n^2\lg d)$. 
\end{theorem}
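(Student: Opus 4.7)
The plan is to follow exactly the same hyperplane-partition strategy as in the proof of \Cref{thm:astar_upper}. The only place where the bound $|\Gcal_v| \le (n-1)!$ is actually used is in counting the hyperplanes that carve $\R^n$ into regions of constant behavior. Under the bounded-out-degree assumption, I would substitute a much sharper bound on $|\Gcal_v|$ and propagate it through the same calculation.

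First, I would re-examine $|\Gcal_v|$. Each element of $\Gcal_v(\xinst)$ is the cost of some simple $s$--$v$ path, so $|\Gcal_v(\xinst)|$ is at most the number of simple $s$--$v$ paths in the underlying graph. Since each vertex has out-degree at most $d$, the number of walks of length exactly $k$ starting at $s$ is at most $d^k$, hence the number of simple $s$--$v$ paths of length $k$ is also at most $d^k$. A simple path uses at most $n-1$ edges, so
\[
|\Gcal_v(\xinst)| \;\le\; \sum_{k=0}^{n-1} d^k \;=\; \Ord(d^n),
\]
and therefore $|\Gcal_V(\xinst)| = \sum_{v\in V}|\Gcal_v(\xinst)| = \Ord(n d^n)$ for every $\xinst \in \Pi$.

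Next, I would reuse the proof of \Cref{thm:astar_upper} verbatim, only replacing the bound on $|\Gcal_V|$. Fixing any $N$ instances $\xinst_1,\dots,\xinst_N$, the relevant hyperplanes in $\R^n$ take the form $g_{v_i}(\xinst_k) + \rho_{v_i} = g_{v_j}(\xinst_k) + \rho_{v_j}$, and there are at most $\sum_{k\in[N]}\binom{|\Gcal_V(\xinst_k)|}{2} = \Ord(N n^2 d^{2n})$ of them. By the same arrangement bound used in \Cref{thm:astar_upper}, these hyperplanes partition $\R^n$ into $\Ord\!\prn*{(\e N n^2 d^{2n})^n}$ regions, and by \Cref{lem:astar_total_order} each utility $u_\rhob(\xinst_k)$ is constant on every region. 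To shatter $N$ instances we therefore need $\Ord\!\prn*{(\e N n^2 d^{2n})^n} \ge 2^N$, i.e.\ $N \le n \lg(\e N n^2 d^{2n}) = n \lg N + \Ord(n \lg n) + \Ord(n^2 \lg d)$. Assuming $d \ge 2$ (the case $d = 1$ is trivial since then $|\Gcal_v|\le 1$), solving for the largest such $N$ yields $N = \Ord(n^2 \lg d)$.

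The only genuinely new content is the path-counting estimate; the rest of the argument is a bookkeeping exercise. The main place I would double-check is the inequality \#\{simple $s$--$v$ paths of length $k$\} $\le d^k$: this holds because the branching at each step of any walk from $s$ is upper bounded by $d$ regardless of whether vertices repeat, so restricting to simple paths can only decrease the count. Everything else transfers without change from \Cref{thm:astar_upper}, so no separate geometric or combinatorial novelty is needed.
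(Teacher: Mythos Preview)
Your proposal is correct and takes essentially the same approach as the paper: bound $|\Gcal_v|$ by counting simple $s$--$v$ paths under the out-degree constraint, then plug this into the hyperplane-arrangement argument of \Cref{thm:astar_upper}. The paper's only difference is a marginally tighter path count $\sum_{k=0}^{n-2} d^k \le (n-1)d^{n-2}$ (indexing by intermediate vertices rather than edges), but this does not affect the final $\Ord(n^2\lg d)$ bound.
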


\begin{proof}
	Under the assumption on the maximum degree, there are at most $\sum_{k=0}^{n-2}d^k \le (n-1)d^{n-2}$ simple $s$--$v$ paths, which implies $|\Gcal_v|\le (n-1)d^{n-2}$ for every $v \in V$. 
	Therefore, we have $|\Gcal_V| \le n\times (n-1)d^{n-2}$. 
	Following the proof of \Cref{thm:astar_upper}, we can obtain an upper bound on $\Pdim(\Ucal)$ by solving $\Ord\prn*{N^n\binom{n(n-1)d^{n-2}}{2}^n} \ge 2^N$ for the largest $N$, 
	which yields $\Pdim(\Ucal) = \Ord(n^2\lg d)$.
\end{proof}

Second, if edge weights are non-negative integers bounded by $\ell$, we can obtain the following bound. 

\begin{theorem}\label{thm:astar_integer}
	Assume that edge weights $\set*{w_e}_{e\in E}$ of all instances in $\Pi$ are non-negative integers bounded by a constant $\ell$ from above. Then, it holds that $\Pdim(\Ucal) = \Ord(n\lg(n\ell))$.	
\end{theorem}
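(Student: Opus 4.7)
The plan is to apply the hyperplane-counting argument used in the proof of \Cref{thm:astar_upper} almost verbatim, but with a much sharper bound on $|\Gcal_v|$ coming from the integer-weight assumption.

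First I would observe that the argument already used in \Cref{thm:astar_upper} (which bounds $|\Gcal_v|$ by the number of simple $s$--$v$ paths) implicitly relies on the invariant that every value ever assigned to $g_v$ equals the weight of some simple $s$--$v$ path. Indeed, each assignment is of the form $g_v \gets g_u + w_{(u,v)}$ paired with $\pointer(v)\gets u$ in \Cref{alg:astar}, and a standard induction---using the non-negativity of the edge weights to rule out the case that setting $\pointer(\vc)\gets v$ would close a cycle---shows that the pointer structure is always a rooted tree, so the pointer chain traced from $v$ back to $s$ at the moment $g_v$ was set is a simple $s$--$v$ path of weight exactly $g_v$. Under the assumption $w_e \in \set*{0,1,\dots,\ell}$, such a path has at most $n-1$ edges of integer weight at most $\ell$, so every possible value of $g_v$ is an integer in $\set*{0,1,\dots,(n-1)\ell}$. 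Consequently,
\[
    |\Gcal_v(\xinst)| \le (n-1)\ell + 1
    \quad\text{and}\quad
    |\Gcal_V(\xinst)| \le n\prn*{(n-1)\ell + 1} = \Ord(n^2\ell)
\]
for every $\xinst\in\Pi$ and every $v\in V$.

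Next I would plug this bound into the hyperplane construction from the proof of \Cref{thm:astar_upper}. For any $N$ instances $\xinst_1,\dots,\xinst_N$, the hyperplanes $g_{v_i}(\xinst_k) + \rho_{v_i} = g_{v_j}(\xinst_k) + \rho_{v_j}$ number at most $\binom{|\Gcal_V(\xinst_k)|}{2} = \Ord(n^4\ell^2)$ per instance, giving a total of $m = \Ord(Nn^4\ell^2)$. By the same geometric fact cited in \Cref{thm:astar_upper}, these partition $\R^n$ into $\Ord\prn*{(\e m)^n}$ regions, and \Cref{lem:astar_total_order} implies that $\prn*{u_\rhob(\xinst_1),\dots,u_\rhob(\xinst_N)}$ is constant on each region. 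Shattering $N$ instances therefore requires $2^N \le \Ord\prn*{(\e Nn^4\ell^2)^n}$; taking logarithms and substituting $N = c\,n\lg(n\ell)$ shows that $\lg N = \Ord(\lg(n\ell))$, so this inequality holds only for $N = \Ord(n\lg(n\ell))$, which is the claimed pseudo-dimension bound.

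The only slightly delicate step is the invariant claim underlying the improved bound on $|\Gcal_v|$, but this is essentially the same invariant already used in \Cref{thm:astar_upper} to count simple $s$--$v$ paths, and the non-negativity of the weights makes the induction routine. Everything else is a mechanical substitution of the new, much smaller, bound on $|\Gcal_V|$ into the proof of \Cref{thm:astar_upper}.
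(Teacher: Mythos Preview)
Your proposal is correct and follows essentially the same approach as the paper: bound $|\Gcal_v|$ by the number of integers in $\set*{0,1,\dots,(n-1)\ell}$, hence $|\Gcal_V| = \Ord(n^2\ell)$, then substitute this into the hyperplane-counting argument of \Cref{thm:astar_upper} and solve $2^N \le \Ord\prn*{(Nn^4\ell^2)^n}$ for $N$. The only difference is cosmetic---you spell out the pointer-tree invariant justifying that each $g_v$ is the weight of a simple $s$--$v$ path, whereas the paper takes this for granted from the discussion preceding \Cref{lem:astar_total_order}.
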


\begin{proof}
	Under the assumption, every $g$-cost $g_v$ takes a non-negative integer value at most $n\ell$. 
	Since $\Gcal_v$ consists of distinct $g$-cost values, $|\Gcal_v| \le n\ell$ holds, hence $|\Gcal_V| \le n^2\ell$. 
	Solving $\Ord\prn*{N^n\binom{n^2\ell}{2}^n} \ge 2^N$ for the largest $N$, we obtain $\Pdim(\Ucal) = \Ord(n\lg(n\ell))$.	
\end{proof}

Note that if $\ell = \Ord(\poly(n))$ holds, we have $\Pdim(\Ucal) = \Ord(n \lg n)$. 
%As in the above proof, if $|\Gcal_V| \le \poly(n)$ holds, the bound can be improved to $\Pdim(\Ucal) = \Ord(n \lg n)$. 

\paragraph{On reopening.} 
\astar is usually allowed to reopen closed vertices as in \Cref{step:reopening_if,step:parent_pointer_updating,step:reopening_o}. 
This, however, causes $\Omega(2^{n})$ iterations in general \citep{Martelli1977-ij}, albeit always finite \citep{Valenzano2016-re}. 
A popular workaround is to simply remove \Cref{step:reopening_if,step:parent_pointer_updating,step:reopening_o}, and such \astar without reopening has also been extensively studied \citep{Valenzano2014-sw,Sepetnitsky2016-zo,Chen2019-nf,Chen2021-zq}. 
Note that our results are applicable to \astar both with and without reopening.

\section{Lower bounds on the pseudo-dimension}\label{sec:lower_bound}

We present lower bounds on the pseudo-dimension for \gbfs/\astar. 
We prove the result by constructing $\Omega(n)$ shatterable instances with unweighted graphs. 
Therefore, the $\Ord(n \lg n)$ upper bounds for \gbfs (\Cref{thm:gbfs_upper}) and \astar under the edge-weight assumption (\Cref{thm:astar_integer}) are tight up to a $\lg n$ factor. 

\begin{figure}[tb]
	\centering
	\includesvg[width=1.0\textwidth]{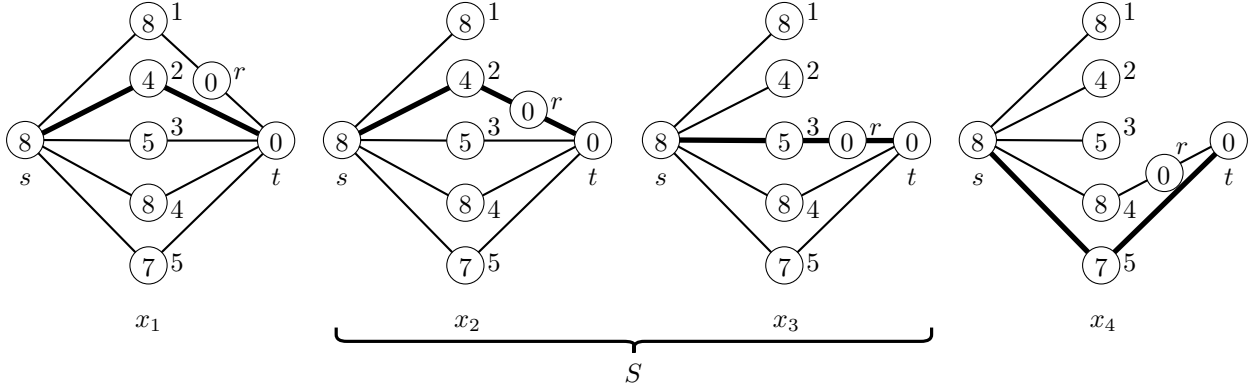}
	\caption{An illustration of the instances $\xinst_1,\dots,\xinst_{n-4}$ for $n=8$. 
	Each vertex is labeled by $s$, $r$, $t$, or $i \in [n-3]$, as shown nearby the vertex circles. 
	The values in vertex circles represent $\rhob$ that makes $A_\rhob$ return suboptimal paths to $\xinst_2$ and $\xinst_3$, i.e., $S = \set*{2, 3}$. 
	The thick edges indicate returned paths.}\label{fig:gbfs-lower}
\end{figure}

\begin{theorem}\label{thm:lower}
	For \gbfs/\astar $A_\rhob$ with parameters $\rhob \in \R^n$, it holds that $\Pdim(\Ucal) = \Omega(n)$. 
\end{theorem}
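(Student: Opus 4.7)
The plan is to exhibit $N = n - 4$ path-finding instances on a shared vertex set of size $n$ that $\Ucal$ shatters simultaneously for both \gbfs and \astar, which gives $\Pdim(\Ucal) \ge N = \Omega(n)$. Fix $V = \set{s, r, t, v_1, \dots, v_{n-3}}$ and reserve $w := v_{n-3}$ as an auxiliary detour vertex. For each $i \in [n-4]$, I construct an unweighted instance $x_i$ whose only edges are $(s, v_i)$, $(v_i, t)$, $(s, r)$, $(r, w)$, $(w, t)$: a short $s$--$t$ path of cost $2$ through $v_i$ and a long $s$--$t$ path of cost $3$ through $r$ and $w$, with all other $v_j$ ($j \in [n-4]\setminus\set{i}$) isolated in $x_i$. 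I take $u_\rhob(x_i) \in \set{2, 3}$ to be the cost of the path returned by $A_\rhob$; with thresholds $z_i := 2.5$, the binarized utility is the indicator that $A_\rhob$ returns the long path on $x_i$.

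First I would analyze $A_\rhob$ on $x_i$. Because only $s, r, w, v_i, t$ ever enter $\open$, $u_\rhob(x_i)$ depends solely on $\rho_r, \rho_w, \rho_t, \rho_{v_i}$, and the remaining coordinates of $\rhob$ are immaterial. Fix $\rho_r = \rho_w = \rho_t = 0$ and let $\rho_{v_i} \in \set{-M, +M}$ for a constant $M > 2$. For \gbfs, the first pop from $\set{r, v_i}$ is decided by the sign of $\rho_{v_i}$: if $\rho_{v_i} = -M$, \gbfs expands $v_i$ and returns the cost-$2$ path, while if $\rho_{v_i} = +M$, it expands $r$ and then $w$ (since $\rho_w < \rho_{v_i}$), returning the cost-$3$ path. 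For \astar, the $g$-costs of $r$ and $v_i$ are both $1$ after expanding $s$, so the same comparison controls the first selection; when $\rho_{v_i} = +M$ the (inadmissible) heuristic keeps $v_i$ off the frontier until \astar pops $t$ via $w$, giving cost $3$, and when $\rho_{v_i} = -M$ the pointer of $t$ is installed at $v_i$ during the second expansion and is never overwritten, giving cost $2$.

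Given the two-way dichotomy on each $x_i$, shattering is immediate. For any $S \subseteq [n-4]$, define $\rho_{v_i} := +M$ if $i \in S$ and $\rho_{v_i} := -M$ otherwise, holding $\rho_r = \rho_w = \rho_t = 0$ and leaving the remaining coordinates arbitrary. Then $u_\rhob(x_i) = 3$ iff $i \in S$, so the Boolean vector $\prn{\Ibb\prn{u_\rhob(x_i) \ge z_i}}_{i=1}^{N}$ realizes every one of the $2^N$ patterns as $\rhob$ varies, and $\set{x_1, \dots, x_N}$ is shattered. Hence $\Pdim(\Ucal) \ge N = n - 4 = \Omega(n)$. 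The main obstacle is the \astar verification: I must confirm that the $g$-cost updates and possible reopenings in \Cref{alg:astar} never corrupt the two-way decision—specifically, that when $\rho_{v_i} = +M$ \astar never revisits $v_i$ before popping $t$, and that when $\rho_{v_i} = -M$ the tie between $w$ and $t$ at the end is resolved by \Cref{assump:tie_break} without overwriting $t$'s pointer. Both checks reduce to a direct inspection of the five-vertex gadget.
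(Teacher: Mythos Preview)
Your construction is correct and actually simpler than the paper's. The paper builds instances in which \emph{every} middle vertex $1,\dots,n-3$ is adjacent to $s$, and the suboptimal detour $s\to i\to r\to t$ is accessible only through vertex $i$; realizing a given sign pattern then requires a carefully graded choice $\rho_i=i+2$ for $i\in S\cup\{m\}$ so that, among the vertices with a child, the one labeled $i$ is expanded first precisely when $i\in S$. You instead isolate all but one middle vertex in each $x_i$, so the outcome on $x_i$ depends only on the single coordinate $\rho_{v_i}$ (against the fixed anchor $\rho_r=\rho_w=\rho_t=0$). This decoupling makes the shattering argument a one-liner---set each $\rho_{v_i}$ independently to $\pm M$---and reduces the per-instance verification to a five-vertex gadget, which you have essentially carried out. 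Both constructions use unweighted graphs, so your version equally supports the tightness claim for \gbfs and for \astar under the integer-weight assumption. Your remaining obstacle, the tie between $w$ and $t$ in the \astar case with $\rho_{v_i}=-M$, is indeed harmless regardless of how \Cref{assump:tie_break} orders $w$ and $t$: even if $w$ is popped first, expanding it yields $g_{\mathrm{new}}=3>g_t=2$, so $\pointer(t)$ is not overwritten and the returned path still has cost $2$.
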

 
\shortlongproofs{%short proof
	\begin{proof}[Proof sketch]
		We construct a series of $n-4$ instances, $\xinst_{1},\dots,\xinst_{n-4}$, that can be shattered by $\Ucal$, where each $u_\rhob$ returns the length of an $s$--$t$ path found by $A_\rhob$. 
		We label vertices in $V$ by $s$, $r$, $t$, or $i \in [n-3]$. 
		See \Cref{fig:gbfs-lower} for an example with $n=8$. 
		We define $M = V\setminus \set*{s,r,t}$. 
		For each $\xinst_i$ ($i \in  [n-4]$), we draw edges $(s, v)$ for $v \in M$ and $(v, t)$ for $v \in \Set*{v' \in M}{v' > i}$, which constitute optimal $s$--$t$ paths of length $2$. 
		In addition, for each $\xinst_i$, we draw edges $(i, r)$ and $(r, t)$, where $s \to i \to r \to t$ is the only suboptimal path of length $3$. 
		Letting $t_i = 2.5$ for $i \in [n-4]$, we prove that $\Ucal$ can shatter those $n-4$ instances, i.e., $A_\rhob$ can return suboptimal solutions to any subset of $\set*{\xinst_1\dots,\xinst_{n-4}}$ by appropriately setting $\rhob$. 
	
		Let $S \subseteq [n-4]$ indicate a subset of instances, to which we will make $A_\rhob$ return suboptimal solutions. 
		We show that for any $S$, we can set $\rhob$ so that $A_\rhob$ returns $s \to i \to r \to t$ to $\xinst_i$ if and only if $i \in S$. 
		We refer to the vertex labeled by $n-3$ as $m$, which we use to ensure that every instance has an optimal path $s \to m \to t$. 
		We set $\rhob$ as follows: 
		$\rho_s = n$ (or an arbitrary value), 
		$\rho_r = \rho_t = 0$, 
		$\rho_i = i+2$ if $i \in S \cup \set*{m}$, 
		and $\rho_i = n$ (or a sufficiently large value) if $i \in [n-4]\setminus S$. 
		If $A_\rhob$ with this $\rhob$ is applied to $\xinst_i$, it iteratively selects vertices in $S \cup \set*{m}$ in increasing order of their labels until a vertex that has a child is selected. 
		Once a vertex with a child is expanded, it ends up returning $s \to i \to r \to t$ if $i \in S$ and $s \to v \to t$ for some $v > i$ if $i \notin S$.
		We detail this in the full proof presented in the supplementary. (As we will there, both \gbfs and \astar return the same $s$--$t$ paths.)
	\end{proof}
}{%long proof
	\begin{proof}
		We construct a series of $n-4$ instances, $\xinst_{1},\dots,\xinst_{n-4}$, that can be shattered by $\Ucal$, where each $u_\rhob$ returns the length of an $s$--$t$ path found by $A_\rhob$. 
		We label vertices in $V$ by $s$, $r$, $t$, or $i \in [n-3]$. 
		See \Cref{fig:gbfs-lower} for an example with $n=8$. 
		We define $M = V\setminus \set*{s,r,t}$. 
		For each $\xinst_i$ ($i \in  [n-4]$), we draw edges $(s, v)$ for $v \in M$ and $(v, t)$ for $v \in \Set*{v' \in M}{v' > i}$, which constitute optimal $s$--$t$ paths of length $2$. 
		In addition, for each $\xinst_i$, we draw edges $(i, r)$ and $(r, t)$, where $s \to i \to r \to t$ is the only suboptimal path of length $3$. 
		Letting $t_i = 2.5$ for $i \in [n-4]$, we prove that $\Ucal$ can shatter those $n-4$ instances, i.e., $A_\rhob$ can return suboptimal solutions to any subset of $\set*{\xinst_1\dots,\xinst_{n-4}}$ by appropriately setting $\rhob$. 
	
		Let $S \subseteq [n-4]$ indicate a subset of instances, to which we will make $A_\rhob$ return suboptimal solutions. 
		We show that for any $S$, we can set $\rhob$ so that $A_\rhob$ returns $s \to i \to r \to t$ to $\xinst_i$ if and only if $i \in S$. 
		We refer to the vertex labeled by $n-3$ as $m$, which we use to ensure that every instance has an optimal path $s \to m \to t$. 
		We set $\rhob$ as follows: 
		$\rho_s = n$ (or an arbitrary value), 
		$\rho_r = \rho_t = 0$, 
		$\rho_i = i+2$ if $i \in S \cup \set*{m}$, 
		and $\rho_i = n$ (or a sufficiently large value) if $i \in [n-4]\setminus S$. 
		If $A_\rhob$ with this $\rhob$ is applied to $\xinst_i$, it iteratively selects vertices in $S \cup \set*{m}$ in increasing order of their labels until a vertex that has a child is selected. 
		Once a vertex with a child is expanded, it ends up returning $s \to i \to r \to t$ if $i \in S$ and $s \to v \to t$ for some $v > i$ if $i \notin S$.
		We below confirm this more precisely, separately for \gbfs and \astar. 
		\paragraph{\gbfs.} 
		We consider applying \gbfs $A_\rhob$ to $\xinst_i$. 
		$A_\rhob$ first expands $s$ and add vertices in $M$ to $\open$. 
		Since vertices in $v \in [n-4]\setminus S$ have sufficiently large scores of $n$, they are never selected before any vertex in $S \cup \set*{m}$. 
		Thus, $A_\rhob$ selects a vertex from $S \cup \set*{m}$ with the smallest score. 
		If the selected vertex, denoted by $v$, satisfies $v < i$, there is no child of $v$; hence, nothing is added to $\open$, and we go back to \Cref{step:gbfs_argmin}. 
		In this way, $A_\rhob$ iteratively moves $v \in S \cup \set*{m}$ that has no child from $\open$ to $\closed$. 
		Consider the first time when the selected vertex $v \in S \cup \set*{m}$ has a child $\vc$ (this situation is guaranteed to occur since $m$ always has a child). 
		If $i \notin S$, we have $v \neq i$ since $v$ is selected from $S \cup \set*{m}$. 
		Then, since $v$'s child is $\vc = t$, $A_\rhob$ returns $s \to v \to t$ with $v \neq i$. 
		If $i \in S$, then $i$ has the smallest score ($\rho_i = i+2$) among all vertices in $S \cup \set*{m}$ that have a child. Thus, $A_\rhob$ selects $i$ and opens $r$. 
		Now, $r$ has the smallest score of $\rho_r = 0$. Therefore, $A_\rhob$ selects $r$ and reaches $t$, returning $s \to i \to r \to t$.  
		Consequently, $A_\rhob$ returns the suboptimal path if and only if $i \in S$. 
		\paragraph{\astar.} 
		It first expands $s$ and add $M$ to $\open$. 
		Since $g_v = 1$ for all $v \in M$, only $\rhob$ values matter when comparing the scores, as with the case of \gbfs. 
		Therefore, \astar iterates to move vertices in $S \cup \set*{m}$ from $\open$ to $\closed$ until a vertex that has a child is selected. 
		Consider the first time a selected vertex $v$ has a child $\vc$ (so far, $s$ is not reopened since $g_s = 0$). 
		As with the case of \gbfs, we have $v\neq i$ and $\vc = t$ if $i \notin S$, or $v = i$ and $\vc = r$ if $i \in S$. 
		Now, every $v' \in \open \setminus \set*{\vc}$ has a score of at least $4$ since $g_{v'} = 1$ and $\rho_{v'} \ge 3$ for $v' \in M$. 
		Therefore, if $i \notin S$, $t \in \open$ has the smallest score of $g_t + \rho_t = 2 + 0 = 2$. 
		Thus, $A_\rhob$ next selects $t$ and returns $s \to v \to t$, where $v \neq i$. 
		If $i \in S$, since $r \in \open$ has the smallest score of $g_r + \rho_r = 2 + 0 = 2$, $A_\rhob$ selects $r$ and opens $t$. 
		Then, since $t$ has the score of $g_t + \rho_t = 3 + 0 = 3$, $A_\rhob$ selects $t$ and returns $s \to i \to r \to t$. 
		To conclude, $A_\rhob$ returns the suboptimal path if and only if $i \in S$.
	\end{proof}
}

\section{Toward better guarantees on the suboptimality of \astar}\label{sec:suboptimality_astar}

Given the results in \Cref{sec:upper_bound,sec:lower_bound}, a major open problem is to close the $\tilde\Ord(n)$ gap\footnote{
	We use $\tilde\Ord$ and $\tilde\Omega$ to hide logarithmic factors of $n$ and $N$ for simplicity.
} 
between the $\Ord(n^2 \lg n)$ upper bound and the $\Omega(n)$ lower bound for \astar in general cases. 
This problem seems very complicated, as we will discuss in \Cref{sec:conclusion}. 
Instead, we here study a particular case where we want to bound the expected suboptimality of \astar, which is an important performance measure since learned heuristic values are not always admissible. 
We show that a general bound obtained from \Cref{thm:astar_upper} can be sometimes improved by using a $\rhob$-dependent worst-case bound \citep{Valenzano2014-sw}. 

For any $\xinst \in \Pi$, let $\opt(\xinst)$ and $\cost_\rhob(\xinst)$ be the costs of an optimal solution and an $s$--$t$ path returned by $A_\rhob$, respectively, and let $u_\rhob(\xinst) = \cost_\rhob(\xinst) - \opt(\xinst)$ be the suboptimality. 
From \Cref{thm:astar_upper} and \Cref{prop:complexity_bound}, we can obtain the following high-probability bound on the expected suboptimality: 
\begin{equation}\label{eq:supopt_bound}
	\mathop{\E}_{\xinst\sim\Dcal} \brc*{\cost_\rhob(\xinst) - \opt(\xinst)} \le \frac{1}{N} \sum_{i=1}^N \prn*{\cost_\rhob(\xinst) - \opt(\xinst)} + \tilde\Ord\prn*{H\sqrt{\frac{n^2 + \lg \frac{1}{\delta}}{N}}}.
\end{equation}
That is, the expected suboptimality can be bounded from above by the empirical suboptimality over the $N$ training instances (an empirical term) plus an $\tilde\Ord(H\sqrt{n^2/N})$ term (a complexity term). 
While this bound is useful when $N \gg n^2$, we may not have enough training instances in practice. 
In such cases, the complexity term becomes dominant and prevents us from obtaining meaningful guarantees. 
In what follows, we present an alternative bound of the form ``an empirical term $+$ a complexity term'' that can strike a better balance between the two terms when $N$ is not large enough relative to $n^2$.  

To this end, we use the notion of \textit{consistency}. 
We say $\rhob$ is \textit{consistent} if $\rho_v \le \rho_{\vc} + w_{(v, \vc)}$ holds for all $(v, \vc) \in E$. 
If $\rhob$ is consistent, \astar without reopening returns an optimal solution. 
\citet[Theorem 4.6]{Valenzano2014-sw} revealed that for any instance $\xinst \in \Pi$, the suboptimality of \astar can be bounded by the inconsistency accumulated along an optimal path (excluding the first edge containing $s$) as follows:\footnote{The original theorem in \citep{Valenzano2014-sw} is applicable only to the case where \astar does not reopen vertices and $\rho_t = 0$. These restrictions are unnecessary as detailed in \Cref{app_sec:valenzano}, and thus our result holds regardless of reopening.}
\begin{equation}\label{eq:inconsistency_valenzano}
	\cost_\rhob(\xinst) - \opt(\xinst) \le \Delta_\rhob(\xinst) \coloneqq \sum_{(v, \vc) \in S^*(\xinst), v \neq s} \max\set*{\rho_v - \rho_{\vc} - w_{(v, \vc)}, 0},
\end{equation}
where  $S^*(\xinst) \subseteq E$ is an optimal solution to $\xinst$ (if there are multiple optimal solutions, we break ties by using the lexicographical order induced from the total order defined in \Cref{assump:tie_break}). 
We call $\Delta_\rhob(\xinst)$ the inconsistency (of $\rhob$ on $S^*(\xinst)$).  

Given $N$ instances $\xinst_1,\dots,\xinst_N$, we can compute the empirical inconsistency, $\frac{1}{N}\sum_{i=1}^N \Delta_\rhob (\xinst_i)$, at the cost of solving the $N$ instances, which we will use as an empirical term. 
To define the corresponding complexity term, we regard $\Delta_\rhob(\cdot):\Pi\to[0,\hat H]$ as an inconsistency function parameterized by $\rhob$, where we will discuss how large $\hat H > 0$ can be later, and we let $\hat \Ucal = \Set*{\Delta_\rhob:\Pi\to [0, \smash{\hat H}]}{\rhob\in\R^n}$. 
The following theorem says that the class $\hat\Ucal$ of inconsistency functions has a smaller pseudo-dimension than the class $\Ucal$ of general utility functions. 

\begin{theorem}\label{thm:pdim_inc}
	For the class $\hat\Ucal$ of inconsistency functions, it holds that $\Pdim(\hat\Ucal) = \Ord(n \lg n)$. 
\end{theorem}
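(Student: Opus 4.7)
The plan is to follow the partition-and-count strategy used in the proof of \Cref{thm:astar_upper}, exploiting that for any fixed instance $\xinst$, the inconsistency $\Delta_\rhob(\xinst)$ is a continuous piecewise-linear function of $\rhob$. Indeed, by \Cref{eq:inconsistency_valenzano}, $\Delta_\rhob(\xinst)$ is a sum of at most $n-2$ terms of the form $\max\set{\rho_v - \rho_{\vc} - w_{(v,\vc)},\, 0}$, one per edge $(v,\vc) \in S^*(\xinst)$ with $v \neq s$. Each summand kinks along the single hyperplane $\rho_v - \rho_{\vc} = w_{(v,\vc)}$ in $\R^n$, so $\Delta_\rhob(\xinst)$ coincides with a fixed affine function of $\rhob$ on each cell of the arrangement formed by these at most $n-2$ hyperplanes.

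To bound $\Pdim(\hat\Ucal)$, I fix arbitrary $N$ instances $\xinst_1,\dots,\xinst_N$ together with thresholds $z_1,\dots,z_N \in \R$, and count the number of distinct sign vectors $\bigl(\Ibb(\Delta_\rhob(\xinst_i) \geq z_i)\bigr)_{i\in[N]}$ achievable as $\rhob$ ranges over $\R^n$. Taking the union over $i \in [N]$ of the kink hyperplanes above gives an arrangement of at most $N(n-2)$ hyperplanes in $\R^n$, which by the counting fact recalled in the footnote of the proof of \Cref{thm:astar_upper} partitions $\R^n$ into at most $\Ord\bigl((eN(n-2))^n\bigr)$ regions. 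Within any such region, every $\Delta_\rhob(\xinst_i)$ is a fixed affine function of $\rhob$, so each level set $\set{\rhob : \Delta_\rhob(\xinst_i) \geq z_i}$ cuts the region along a single hyperplane, and the $N$ resulting hyperplanes further subdivide the region into at most $\Ord\bigl((eN)^n\bigr)$ sub-regions, each carrying a unique sign vector. Multiplying the two bounds yields at most $\Ord\bigl((e^2 N^2 n)^n\bigr)$ distinct sign vectors over all of $\R^n$.

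Shattering $N$ instances requires this count to be at least $2^N$, so solving $2^N \le \Ord\bigl((e^2 N^2 n)^n\bigr)$ for the largest $N$ yields $\Pdim(\hat\Ucal) = \Ord(n \lg n)$, matching the claim. The main conceptual obstacle relative to \Cref{thm:gbfs_upper} is that $\Delta_\rhob$ is piecewise \emph{linear} rather than piecewise constant in $\rhob$, so merely counting the pieces is insufficient: one must additionally enumerate the sign patterns of $N$ affine functions within each piece. Fortunately this secondary count multiplies the total by only another $n$-th power factor, which is absorbed into the same $\Ord(n \lg n)$ bound once $\log_2$ is taken and the inequality is solved, leaving the overall analysis no harder than that of \Cref{thm:astar_upper} while giving a strictly better dimension bound.
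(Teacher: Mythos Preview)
Your proof is correct, and it takes a somewhat different route from the paper's.

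The paper proves \Cref{thm:pdim_inc} by invoking the general framework of \citet{Balcan2021-jv}: it shows that the dual class $\hat\Ucal^*$ is $(\Fcal,\Bcal,\Ord(n^2))$-piecewise decomposable, where $\Fcal$ is the class of affine functions and $\Bcal$ the class of halfspace indicators on $\R^n$, and then applies their black-box bound (their Theorem~3.3) together with $\Pdim(\Fcal^*)=\VCdim(\Bcal^*)=n+1$ to conclude $\Pdim(\hat\Ucal)=\Ord(n\lg n)$. Your argument instead unfolds this machinery by hand, in the spirit of the paper's own proof of \Cref{thm:astar_upper}: you first lay down the $\Ord(Nn)$ kink hyperplanes to make each $\Delta_\rhob(\xinst_i)$ affine on every cell, and then---because the pieces are affine rather than constant---you add a second pass of $N$ threshold hyperplanes inside each cell to freeze the sign vector. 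This two-stage cell count is exactly what the $\Pdim(\Fcal^*)$ term in the Balcan et al.\ bound is accounting for abstractly. Your approach is more elementary and self-contained (no external theorem needed), and it also uses the slightly sharper observation that only $n-2$ boundary hyperplanes per instance are needed (edges on $S^*(\xinst)$) rather than the $\Ord(n^2)$ the paper uses (all edges)---though this makes no difference asymptotically. The paper's approach is more modular and makes explicit the connection to the general piecewise-decomposability framework.
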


By using \eqref{eq:inconsistency_valenzano}, \Cref{prop:complexity_bound}, and \Cref{thm:pdim_inc}, we can obtain the following high-probability bound on the expected suboptimality, whose complexity term has a better dependence on $n$ than that of \eqref{eq:supopt_bound}: 
\begin{equation}\label{eq:inc_subopt_bound}
	\mathop{\E}_{\xinst\sim\Dcal} \brc*{\cost_\rhob(\xinst) - \opt(\xinst)} 
	\le 
	\mathop{\E}_{\xinst \sim \Dcal}[\Delta_\rhob(\xinst)] 
	\le 
	\frac{1}{N}\sum_{i=1}^N \Delta_\rhob (\xinst_i) 
	+ 
	\tilde\Ord\prn*{\hat H \sqrt{\frac{n + \lg \frac{1}{\delta}}{N}}}. 
\end{equation}
This bound is uniform for all $\rhob \in \R^n$, as with other bounds discussed so far. 
Thus, the bound holds even if we choose $\rhob$ to minimize the empirical inconsistency. 
Note that the empirical inconsistency is convex in $\rhob$ since $\Delta_\rhob (\xinst_i)$ consists of a maximum of a linear function of $\rhob$ and zero, hence easier to minimize than the raw empirical suboptimality in practice (and suitable for a recent online-convex-optimization framework \citep{Khodak2022-sf}). 

Before proving \Cref{thm:pdim_inc}, we present a typical example to show that the inconsistency is not too large relative to the suboptimality. 

\begin{example}\label{exmp:inc_vs_subopt}
	Suppose that every edge weight $w_e$ is bounded to $[0, \ell]$, which ensures that the suboptimality $u_\rhob$ is at most $H = \ell(n-1)$ for any $\rhob \in \R^n$. 
	For simplicity, we consider the following natural way to compute $\rhob$ values: 
	compute an estimate $\hat w_e \in [0, \ell]$ of $w_e$ for each $e\in E$ and let $\rho_v$ be the cost of a shortest $v$--$t$ path with respect to $\set*{\hat w_e}_{e\in E}$. 
	Then, $\rhob$ enjoys the consistency with respect to $\set*{\hat w_e}_{e\in E}$, i.e., $\rho_v \le \rho_{\vc} + \hat w_{(v,\vc)}$ for every $(v, \vc) \in E$. 
	Therefore, it holds that 
	\[
		\Delta_\rhob(\xinst) \le 
		\sum_{(v, \vc) \in S^*(\xinst), v \neq s} \max\set*{\rho_v - \rho_{\vc} - w_{(v,\vc)}, 0}
		\le 
		\sum_{(v, \vc) \in S^*(\xinst), v \neq s} \abs*{\hat w_{(v, \vc)} - w_{(v, \vc)}}. 
	\]
	Hence $\Delta_\rhob$ is at most $\hat H = \ell (n-2)$, implying that $\Delta_\rhob$ does not largely exceed the suboptimality. 
	If empirically accurate estimates $\hat w_e$ for $e \in S^*(\xinst)$ are available, the inconsistency becomes small. 
\end{example}

\shortlongproofs{
	We prove \Cref{thm:pdim_inc} by using the analysis framework by \citet{Balcan2021-jv}. 
	Roughly speaking, if we fix $\xinst\in\Pi$ and see $\Delta_\rhob(\xinst)$ as a function of $\rhob$, it exhibits a piecewise linear structure. 
	By using this structure and \citep[Theorem 3.3]{Balcan2021-jv}, we obtain \Cref{thm:pdim_inc}. 
	We below present a proof sketch due to the space limitation. 
	Please see the supplementary for the complete proof.

	\begin{proof}[Proof sketch]
		Fixing any instance $\xinst \in \Pi$, we define the so-called \textit{dual class} of $\hat\Ucal$ as $\hat\Ucal^*\subseteq {[0, \hat H]}^{\R^n}$, where each $\Delta^*_\xinst \in \hat\Ucal^*$ takes $\rhob\in \R^n$ as input and returns $\Delta_\rhob(\xinst) \in [0, \hat H]$.
		Once $\xinst$ is fixed, $S^*(\xinst)$ is unique due to the tie-breaking. 
		Thus, $\Delta^*_\xinst(\rhob) = \sum_{(v, \vc) \in S^*(\xinst), v \neq s} \max\set*{\rho_v - \rho_{\vc} - w_{(v, \vc)}, 0}$ is uniquely defined as a piecewise linear function of $\rhob$, where pieces are specified by $|E| = \Ord(n^2)$ halfspace boundary functions: $\bdf^{(v, \vc)}(\rhob) = \Ibb\prn*{\rho_v - \rho_{\vc} - w_{(v, \vc)} > 0}$ for all edges $(v, \vc) \in E$. 
		Let $\Fcal$ and $\Bcal$ be the classes of linear and halfspace functions of $\rhob$, respectively. 
		From \citep[Theorem 3.3]{Balcan2021-jv}, 
		\[
			\Pdim\prn{\hat\Ucal} = \Ord\prn*{ \prn*{\Pdim(\Fcal^*) + \VCdim(\Bcal^*)} \lg \prn*{\Pdim(\Fcal^*) + \VCdim(\Bcal^*)} + \VCdim(\Bcal^*)\lg\nbd}
		\]
		holds, where $\Fcal^*$ and $\Bcal^*$ are the dual classes of $\Fcal$ and $\Bcal$, respectively, and $\nbd$ is the number of boundary functions required to specify all the pieces. 
		As mentioned above, $\nbd = \Ord(n^2)$ holds. 
		Furthermore, we can regard $\Fcal^*$ and $\Bcal^*$ as classes of linear and halfspace functions on $\R^{n+1}$, respectively, hence $\Pdim(\Fcal^*) = \VCdim(\Bcal^*) = n + 1$. 
		Thus, we obtain $\Pdim(\hat\Ucal) = \Ord(n \lg n)$.
	\end{proof}
}{
	We prove \Cref{thm:pdim_inc} by using the general analysis framework by \citet{Balcan2021-jv}. 
	To begin with, we introduce Assouad's dual class, which provides a formal definition of the class of functions of $\rhob$. 

	\begin{definition}[\citet{Assouad1983-vb}]\label{def:dual}
		Given a class, $\Hcal\subseteq \R^\Ycal$, of functions $h:\Ycal\to\R$,
		the \textit{dual class} of $\Hcal$ is defined as
		$\Hcal^* = \Set*{h^*_y:\Hcal \to \R}{y\in\Ycal}$ such that $h^*_y(h) = h(y)$ for each $y \in \Ycal$.
	\end{definition}

	In our case, we have $\Ycal = \Pi$ and $\Hcal = \hat\Ucal$, 
	and each $\Delta^*_\xinst \in \hat\Ucal^*$ is associated with an instance $\xinst \in \Pi$. 
	The following definition will be used to capture the piecewise structure of the dual class $\hat\Ucal^*$. 

	\begin{definition}[{\citet[Definition 3.2]{Balcan2021-jv}}]\label{def:piecewise_decomposable}
		A class, $\Hcal\subseteq \R^\Ycal$, of functions is
		\textit{$(\Fcal, \Bcal, \nbd)$-piecewise decomposable}
		for a class $\Bcal\subseteq\set*{0,1}^\Ycal$ of boundary functions and a class $\Fcal\subseteq \R^{\Ycal}$ of piece functions if the following condition holds:
		for every $h\in\Hcal$, there exist $\nbd$ boundary functions $\bdf^{(1)},\dots,\bdf^{(\nbd)} \in\Bcal$ and a piece function $\pf_\bb$ for each binary vector $\bb\in\set*{0,1}^\nbd$ such that for all $y \in\Ycal$,
		it holds that $h(y) = \pf_{\bb_y}(y)$ where $\bb_y = (\bdf^{(1)}(y), \dots, \bdf^{(\nbd)}(y)) \in \set*{0,1}^\nbd$.
	\end{definition}

	The following result of \citep{Balcan2021-jv} provides an upper bound on the pseudo-dimension of a class of functions via the piecewise structure of the dual class. 

	\begin{proposition}[{\citet[Theorem 3.3]{Balcan2021-jv}}]\label{prop:balcan_pdim_bound}
	  Let $\Ucal\subseteq \R^\Pi$ be a class of functions. 
		If $\Ucal^* \subseteq \R^\Ucal$ is $(\Fcal, \Bcal, \nbd)$-piecewise decomposable with a class $\Bcal\subseteq \set*{0,1}^\Ucal$ of boundary functions and a class $\Fcal \subseteq \R^\Ucal$ of piece functions, the pseudo-dimension of $\Ucal$ is bounded as follows: 
		\[
			\Pdim\prn{\Ucal} = \Ord\prn*{ \prn*{\Pdim(\Fcal^*) + \VCdim(\Bcal^*)} \lg \prn*{\Pdim(\Fcal^*) + \VCdim(\Bcal^*)} + \VCdim(\Bcal^*)\lg\nbd}.
		\]
	\end{proposition}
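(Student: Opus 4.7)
The plan is to prove the pseudo-dimension bound by a two-stage dichotomy-counting argument that exploits the piecewise-decomposable structure of $\Ucal^*$ together with standard growth-function (Sauer--Shelah) bounds. Fix an arbitrary candidate set $\set*{x_1, \dots, x_N} \subseteq \Pi$ with thresholds $t_1, \dots, t_N \in \R$, and identify each $x_i$ with its dual function $u^*_{x_i} \in \Ucal^*$. By the piecewise-decomposability hypothesis, each $u^*_{x_i}$ is associated with $\nbd$ boundary functions $\bdf^{(1)}_i, \dots, \bdf^{(\nbd)}_i \in \Bcal$ and a family of piece functions in $\Fcal$ indexed by $\bb \in \set*{0,1}^\nbd$, such that $u^*_{x_i}(u)$ equals a specific piece function evaluated at $u$ whenever the sign pattern $(\bdf^{(j)}_i(u))_{j=1}^\nbd$ is fixed.

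The first counting step partitions $\Ucal$ into cells according to the joint sign pattern of all $N\nbd$ boundary functions $\set*{\bdf^{(j)}_i}_{i \in [N],\, j \in [\nbd]}$, which we regard as $N\nbd$ points of $\Bcal$. The key dualization is $\bdf^{(j)}_i(u) = \bdf^*_u(\bdf^{(j)}_i)$ with $\bdf^*_u \in \Bcal^*$, so the number of distinct joint sign patterns as $u$ varies over $\Ucal$ equals the number of distinct dichotomies that the class $\Bcal^*$ induces on those $N\nbd$ points. By Sauer--Shelah applied to $\Bcal^*$ with $d_B \coloneqq \VCdim(\Bcal^*)$, this count, and hence the number of cells, is at most $\Ord((N\nbd)^{d_B})$.

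The second step counts dichotomies within a single cell. Inside a fixed cell the sign pattern of each $u^*_{x_i}$'s own $\nbd$ boundary functions is constant, so the piecewise-decomposition forces $u^*_{x_i}(u) = \pf_i(u)$ for some cell-dependent piece function $\pf_i \in \Fcal$. Hence the dichotomy $(\Ibb(u(x_i) \ge t_i))_{i=1}^N = (\Ibb(\pf^*_u(\pf_i) \ge t_i))_{i=1}^N$ induced by $u$ in the cell is determined by $N$ threshold evaluations of elements of $\Fcal^* \subseteq \R^\Fcal$ at the $N$ ``points'' $\pf_1, \dots, \pf_N \in \Fcal$. The pseudo-dimension analogue of Sauer--Shelah applied to $\Fcal^*$ with $d_F \coloneqq \Pdim(\Fcal^*)$ bounds this number by $\Ord(N^{d_F})$.

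Summing over cells yields at most $\Ord((N\nbd)^{d_B} \cdot N^{d_F})$ dichotomies realizable by $\Ucal$ on the candidate set, so shatterability of $N$ points requires this quantity to reach $2^N$. Taking logarithms gives the self-referential inequality $N \le (d_B + d_F) \lg N + d_B \lg \nbd + \Ord(1)$, which the standard resolution (if $N \le a \lg N + b$ then $N = \Ord(a \lg a + b)$) converts into $\Pdim(\Ucal) = \Ord((d_B + d_F) \lg(d_B + d_F) + d_B \lg \nbd)$, matching the stated bound. The only real obstacle is bookkeeping: one must invoke the primal/dual identifications $\bdf(u) = \bdf^*_u(\bdf)$ and $\pf(u) = \pf^*_u(\pf)$ in the correct direction in each step so that Sauer--Shelah is applied to $\Bcal^*$ and $\Fcal^*$ (rather than $\Bcal$ and $\Fcal$), after which both counting stages reduce to routine growth-function estimates that combine by the standard cell-decomposition product bound.
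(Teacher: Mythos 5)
This proposition is imported verbatim from \citet{Balcan2021-jv} (their Theorem~3.3) and the paper gives no proof of it, so there is nothing internal to compare against; your argument is, however, essentially the proof given in that source and it is correct: partition the parameter class by the joint sign pattern of the $N\nbd$ boundary functions and bound the number of cells via Sauer--Shelah applied to $\Bcal^*$ with $\VCdim(\Bcal^*)$, bound the threshold dichotomies within each cell via the pseudo-dimension analogue applied to $\Fcal^*$ with $\Pdim(\Fcal^*)$, multiply, and solve $2^N \le \Ord\prn*{(N\nbd)^{\VCdim(\Bcal^*)} N^{\Pdim(\Fcal^*)}}$ for $N$. The only point worth double-checking in a full write-up is the dualization bookkeeping you already flag (evaluating $\bdf^{(j)}_i$ and the cell-dependent piece functions through $\Bcal^*$ and $\Fcal^*$ rather than $\Bcal$ and $\Fcal$), which you handle correctly.
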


	Now, we are ready to prove \Cref{thm:pdim_inc}. 

	\begin{proof}[Proof of \Cref{thm:pdim_inc}]
		Since there is a one-to-one correspondence between $\Delta_\rhob \in \hat\Ucal$ and $\rhob\in \R^n$, we below identify $\Delta_\rhob$ with $\rhob$ for simplicity. 
		Let $\Bcal = \Set*{\Ibb\prn*{\iprod{\zb, \rhob} + z_0}}{(z_0, \zb) \in \R^{n+1}} \subseteq \set*{0, 1}^{\hat\Ucal}$ and $\Fcal = \Set*{\iprod{\zb, \rhob} + z_0}{(z_0, \zb) \in \R^{n+1}} \subseteq \R^{\hat\Ucal}$ be classes of boundary and piece functions, respectively. 
		We show that $\hat\Ucal^*$ is $(\Fcal, \Bcal, \Ord(n^2))$-piecewise decomposable. 

		Fix any $\Delta^*_\xinst \in \hat\Ucal^*$; this also uniquely specifies an instance $\xinst \in \Pi$ and an optimal solution $S^*(\xinst) \subseteq E$ (due to the tie-breaking). 
		We consider $\nbd = |E| = \Ord(n^2)$ boundary functions of form $\bdf^{(v, \vc)}(\rhob) = \Ibb\prn*{\rho_v - \rho_{\vc} - w_{(v, \vc)} > 0}$ for all edges $(v, \vc) \in E$. 
		We below confirm that these boundary functions partition $\R^n \ni \rhob$ into some regions so that in each region, $\Delta^*_\xinst(\rhob)$ can be written as a linear function of $\rhob$, which belongs to $\Fcal$. 
		For each binary vector $\bb_{\rhob} = \prn*{\bdf^{(v, \vc)}(\rhob)}_{(v, \vc)\in E} \in \set*{0, 1}^\nbd$, we define a subset $S_{\rhob}(\xinst)$ of $S^*(\xinst)$ as $S_{\rhob}(\xinst) = \Set*{(v, \vc) \in S^*(\xinst)}{\bdf^{(v, \vc)}(\rhob) = 1, v \neq s}$; that is, each $(v, \vc) \in S_{\rhob}(\xinst)$ satisfies $v \neq s$ and $\rho_v - \rho_{\vc} - w_{(v, \vc)} > 0$. 
		From the definition of $\Delta_\rhob(\xinst)$, we have $\Delta^*_\xinst(\rhob) = \Delta_\rhob(\xinst) = \sum_{(v, \vc) \in S_{\rhob}(\xinst)}(\rho_v - \rho_{\vc} - w_{(v, \vc)})$.  
		This is a linear function of $\rhob$, and thus we can choose a piece function $\pf_{\bb_{\rhob}} \in \Fcal$ such that $\Delta^*_\xinst = \pf_{\bb_{\rhob}}$. 
		This relation holds for every $\bb_\rhob \in \set*{0,1}^K$, and thus we have $\Delta^*_\xinst(\rhob) = \pf_{\bb_{\rhob}}(\rhob)$ for all $\rhob \in \R^n$. 
		Hence $\hat\Ucal^*$ is $(\Fcal, \Bcal, \Ord(n^2))$-piecewise decomposable. 

		Since $\Fcal^*$ and $\Bcal^*$ can be seen as classes of linear and halfspace functions of $(z_0, \zb) \in \R^{n+1}$, respectively, we have $\Pdim(\Fcal^*) = \VCdim(\Bcal^*) = n+1$ (see also \citep[Lemma 3.10]{Balcan2019-rr}, a preprint version of~\citep{Balcan2021-jv}). 
		Therefore, from \Cref{prop:balcan_pdim_bound}, we obtain $\Pdim(\hat\Ucal) = \Ord(n \lg n)$.
	\end{proof}
}

\section{Conclusion and discussion}\label{sec:conclusion}
We have studied the sample complexity of learning heuristic functions for \gbfs and \astar on graphs with a fixed vertex set of size $n$. 
The crucial step is to evaluate the pseudo-dimension of the class of utility functions. 
For \gbfs and \astar, we have proved that the pseudo-dimensions are upper bounded by $\Ord(n\lg n)$ and $\Ord(n^2\lg n)$, respectively. 
As for \astar, we have shown that the bound can be improved to $\Ord(n^2\lg d)$ if every vertex has a degree of at most $d$ and to $\Ord(n \lg n)$ if edge weights are bounded integers. 
We have also presented the $\Omega(n)$ lower bounds for \gbfs and \astar, implying that our bounds for \gbfs and \astar under the integer-weight condition are tight up to a $\lg n$ factor. 
Finally, we have discussed bounds on the suboptimality of \astar and obtained a guarantee with a better complexity term by evaluating the pseudo-dimension of the class of inconsistency functions. 

As mentioned in \Cref{sec:suboptimality_astar}, an open problem is to close the gap between the upper and lower bounds regarding \astar for general cases. 
This, however, does not seem straightforward. We here discuss the reasons for the difficulty. 
As regards the upper bound, the bottleneck is the bound of $(n-2)!$ on $|\Gcal_v|$, but this cannot be improved in general, as shown in \Cref{exmp:large_Gv}. 
Taking this into account, the direct use of Sauer's lemma would not yield better upper bounds. 
Thus, we need to use some special structures of the hyperplanes (e.g., each has only two variables), which would require more complicated analysis.  
As for the lower bound, the construction of the $\Omega(n)$ instances in \Cref{thm:lower} relies on the fact that $\rhob$ has an $n$ degree of freedom. 
In addition, \Cref{thm:astar_integer} implies that we need to consider instances with non-integer edge weights (or exponentially large integer weights in $n$) to obtain a lower bound of $\tilde\Omega(n^2)$.  
Considering the above, we would have to use involved techniques for constructing a set of $\tilde\Omega(n^2)$ shatterable instances. 
Another interesting future direction is to improve upper bounds on the pseudo-dimension by restricting heuristic functions to some classes, as mentioned in \Cref{sec:problem_formulation}. 
We finally discuss limitations of our work. 
As mentioned in \Cref{sec:problem_formulation}, we require every instance to be defined on (subsets of) a fixed vertex set. 
Also, our work does not cover the case where heuristic function values can change depending on instance-dependent features. 
Studying how to overcome these limitations would also constitute interesting future work.

\subsection*{Acknowledgements}
This work was supported by JST ERATO Grant Number JPMJER1903 and JSPS KAKENHI Grant Number JP22K17853.

\bibliographystyle{abbrvnat}
\bibliography{PaperpileHeuristicSearch}

\appendix
\setcounter{equation}{0}
\renewcommand{\theequation}{A\arabic{equation}}

\clearpage

%\onecolumn
\begin{center}
	{\fontsize{18pt}{0pt}\selectfont \bf Appendix}
\end{center}

\newcommand{\Popt}{P_{\mathrm{opt}}}
\newcommand{\inc}{\mathrm{Inc}}
\newcommand{\dg}{{\delta g}}
\newcommand{\gtv}[2]{g^{(#1)}_{#2}}
\newcommand{\dgtv}[2]{\delta g^{(#1)}_{#2}}
\newcommand{\selected}{\texttt{SELECTED}}

\section{Comparisons with previous results on tree search}\label{app_sec:comparison}

We compare our upper bounds with those of existing results on general tree search \citep{Balcan2021-kv,Balcan2021-fz}. 

\citet{Balcan2021-kv} studied the pseudo-dimension for tree-search algorithms in the following situation; a tree-search algorithm with $d$ configurable parameters builds a search tree of size at most $\kappa$ by iteratively choosing an action from a set of at most $T$ possible actions. 
In this setting, they obtained an $\Ord(d\kappa \log T + d \log d)$ upper bound. 
\citet{Balcan2021-fz} removed the dependence on $\kappa$ assuming scores governing the tree search to be defined by \textit{path-wise} functions. 
Their bound is $\Ord(d \Delta^2\log k + d \Delta \log T)$, where $\Delta$ and $k$ are the maximum depth and the number of children, respectively, of search trees. 
Since $\kappa$ can be exponential in the depth $\Delta$, this is a considerable improvement in this context. 

In our setting, since there are $n$ configurable parameters, $\rhob\in\R^n$, we have $d = n$. 
If we apply the bound of~\citep{Balcan2021-kv} to \gbfs/\astar regarded as a tree-search algorithm that iteratively builds search states, $\kappa$ and $T$ values can be as large as $\Omega(n)$. 
This is because \gbfs/\astar can perform $\Omega(n)$ iterations, where each iteration increases the tree size, and the number of possible actions is equal to the size of $\open$, which is $\Omega(n)$ in general. 
Moreover, for \astar with reopening, the number of iterations can be $\Omega(2^n)$ as mentioned in \Cref{subsec:astar_upper}, implying $\kappa = \Omega(2^n)$. 
Thus, only in the case of \astar without reopening, the previous bound matches our $\Ord(n^2\lg n)$ bound (\Cref{thm:astar_upper}). 
As for \gbfs and \astar with reopening, our \Cref{thm:gbfs_upper,thm:astar_upper} provide $\Ord(n\lg n)$ and $\Ord(n^2\lg n)$ bounds, respectively, which improve the $\Ord(n^2 \lg n)$ and $\Ord(n2^n \lg n)$ bounds, respectively, implied by the previous result. 

As for the result of~\citep{Balcan2021-fz}, seeing \gbfs/\astar as a tree-search algorithm again, the maximum tree depth is as large as the number of vertices in general, i.e., $\Delta = \Omega(n)$. 
Also, the number of children can be as large as the size of $\open$, hence $k = \Omega(n)$. 
Thus, the result of~\citep{Balcan2021-fz} leads to an $\Ord(n^3\lg n)$ bound, which is larger than any of our upper bounds.

\section{How to deal with varying $t$}\label{app_sec:additional}
As mentioned in \Cref{subsec:problem}, given a fixed $t \in V$, each entry $\rho_v$ in $\rhob$ indicates an estimated cost of the shortest $v$--$t$ path. 
Therefore, if $t$ changes over instances, we need to define $\rhob$ for each $t$, which we here denote by $\rhob^t \in \R^n$. 
If $t$ changes, the structure of path-finding instances also greatly changes. 
Thus, it is natural to evaluate the performance of \gbfs/\astar separately for each $t$. 
Specifically, we take $\Dcal$ to be a conditional distribution, from which path-finding instances with fixed $t$ are drawn, and we analyze the sample complexity for each possible $t \in V$. 
In practice, $\set*{\rhob^t}_{t \in V}$ may be obtained by, e.g., learning a function that embeds vertices into a metric space and measuring distances on the space, as mentioned in \Cref{subsec:problem}. 
In this case, an embedding function with tunable parameters governs all the $\set*{\rhob^t}_{t \in V}$ values. 
Considering such situations, we need to bound the sample complexity of learning heuristic functions for all possible $t \in V$. 
This can be done at a slight cost of increasing the bound in \Cref{prop:complexity_bound} by taking a union bound over all possible $t \in V$. 
(Note that the upper bounds on the pseudo-dimension in \Cref{thm:gbfs_upper,thm:astar_upper,thm:pdim_inc} hold separately for each $t \in V$ by regarding $\rhob$ as $\rhob^t$.) 
Since there are at most $n$ possible choices of $t$, this replaces $\delta$ in \Cref{prop:complexity_bound} with $\delta/n$, yielding only a $\lg n$ additive factor. 
This effect is small relative to that of the pseudo-dimension term.

\section{A worst-case analysis of \astar regardless of reopening}\label{app_sec:valenzano} 
We show that the existing bound on the suboptimality of \astar by \citet{Valenzano2014-sw} holds regardless of whether we allow \astar to reopen vertices or not, 
and we also remove a minor assumption of $\rho_t = 0$.  
Note that the result of \citep{Valenzano2014-sw}, which focuses on the case without reopening, does not immediately imply the same bound for \astar with reopening since reopening sometimes degrades the solution quality \citep{Sepetnitsky2016-zo}.

We fix an instance and define the inconsistency of an edge $(v, v') \in E$ as 
\begin{align}\label{eq:inc_def}
	\inc(v, v') = \max\set*{\rho_{v} - \rho_{v'} - w_{(v, v')}, 0}.
\end{align}
Fix an optimal $s$--$t$ path and let $\Popt = v_0, v_1, \dots, v_k$ be a sequence of vertices on the optimal path, 
where $v_0 = s$, $v_k = t$, and the optimal $s$--$t$ path visits the vertices in this order. 
Suppose that $v_k = t$ is first selected at the ($T+1$)st iteration, at which the algorithm terminates. 

\begin{theorem}\label{thm:valenzano_reopen}
	Let $\cost$ be the cost of an $s$--$t$ path returned by \astar (\Cref{alg:astar}) with/without reopening, and let $\opt$ be the cost of $\Popt = v_0, v_1, \dots, v_k$. 
	It holds that 
	\[
		\cost \le \opt + \sum_{j = 1}^{k-1} \inc(v_j, v_{j+1}).
	\] 
\end{theorem}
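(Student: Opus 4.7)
The plan is to combine a telescoping identity along the optimal path $\Popt = v_0, \dots, v_k$ with a witness lemma for A*'s selection criterion at termination. Define the pathwise $f$-values $f_j = d(s, v_j) + \rho_{v_j}$, where $d(s, v_j)$ denotes the shortest-path distance. A direct computation gives $f_j - f_{j+1} = \rho_{v_j} - \rho_{v_{j+1}} - w_{(v_j, v_{j+1})} \le \inc(v_j, v_{j+1})$, which yields the telescoping bound $f_j \le f_k + \sum_{i=j}^{k-1}\inc(v_i, v_{i+1})$ for every $j$, together with $f_k = \opt + \rho_t$.

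The key lemma I would prove by induction on the iteration count is that, at the moment $t$ is selected, there exists $v_{j'} \in \Popt \cap \open$ with $j' \ge 1$ and $g_{v_{j'}} = d(s, v_{j'})$. The base witness is $v_0 = s$ with $g_s = 0$. In each subsequent iteration, if the current witness $v_{j^*}$ is not the selected vertex, it stays in $\open$ with correct $g$-cost; if $v_{j^*}$ is selected and expanded, I examine its successor $v_{j^*+1}$ on $\Popt$. A case analysis shows $v_{j^*+1}$ ends the step with $g_{v_{j^*+1}} = d(s, v_{j^*+1})$: it is either newly inserted into $\open$ with $g_{\mathrm{new}} = g_{v_{j^*}} + w = d(s, v_{j^*+1})$, or updated in $\open$ to $\min(g_{v_{j^*+1}}, d(s, v_{j^*+1})) = d(s, v_{j^*+1})$ since $g$ always upper-bounds $d$, or reopened from $\closed$ in the reopening variant, or pinned to $d$ in $\closed$ whenever the no-update condition $g_{\mathrm{new}} \ge g_{v_{j^*+1}}$ applies. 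If $v_{j^*+1}$ ends in $\open$ it becomes the new witness; if it sits in $\closed$ with correct $g$, its last expansion was necessarily with that correct $g$, so $v_{j^*+2}$ also has correct $g$, and iterating this chain along $\Popt$ yields an $\open$-witness before reaching $v_k = t$, which can never lie in $\closed$ since selecting $t$ triggers immediate return.

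Granted the lemma with $j' \ge 1$ --- which holds because $s$ is expanded at iteration $1$ with $g_s = 0$ and hence never reopens, while $s \neq t$ prevents termination at iteration $1$ --- the theorem follows quickly. A*'s selection rule at termination gives $g_t + \rho_t \le g_{v_{j'}} + \rho_{v_{j'}} = f_{j'}$, and combining with $f_{j'} \le f_k + \sum_{j=1}^{k-1}\inc(v_j, v_{j+1})$ and $f_k = \opt + \rho_t$ produces $\cost = g_t \le \opt + \sum_{j=1}^{k-1}\inc(v_j, v_{j+1})$. Crucially, $\rho_t$ cancels on both sides of the final inequality, which is precisely what removes the assumption $\rho_t = 0$ present in Valenzano's original statement.

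The main obstacle is the stale-$\closed$ case in the without-reopening variant: if $v_{j^*+1}$ already lies in $\closed$ with a strictly larger $g$ than $g_{\mathrm{new}}$, the stale $g$-cost persists and the simple invariant above can fail, as is visible on small graphs where a mid-path vertex is expanded out of order via a suboptimal route. To close this gap I would fall back on Valenzano's original path-based argument, which inductively controls the $f$-values of newly added $\open$ entries along the execution and shows that the minimum $f$-value in $\open$ at termination is nonetheless bounded above by $\max_{j \ge 1} f_j$; combined with the same telescoping step, this yields the stated bound in the without-reopening variant as well, so the same theorem holds regardless of whether reopening is enabled.
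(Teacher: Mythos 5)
Your argument for the variant \emph{with} reopening is essentially sound: the invariant that some $v_{j'}\in\Popt\cap\open$ carries the exact cost $g_{v_{j'}}=d(s,v_{j'})$ is the classical Hart--Nilsson--Raphael witness lemma, it survives reopening (a closed vertex whose $g$-cost would strictly decrease is moved back to $\open$), and combining the selection inequality $g_t+\rho_t\le g_{v_{j'}}+\rho_{v_{j'}}$ with the telescoped inconsistency along the suffix of $\Popt$ gives the bound, with $\rho_t$ cancelling as you say; your argument that $j'\ge 1$ is also fine (one small point: use $\cost\le g_t$ rather than equality, as the paper does, since the pointer-traced path can be cheaper than the recorded $g$-cost).

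The genuine gap is the \emph{without}-reopening case, which is half of the statement and is precisely the case the cited result of Valenzano et al.\ concerns. There, as you note yourself, the exact-$g$ invariant fails: a vertex of $\Popt$ can sit in $\closed$ with a stale $g$-cost that is never corrected. ``Falling back on Valenzano's original path-based argument'' is not a proof --- it appeals to essentially the theorem being (re)proved, and the original argument also assumes $\rho_t=0$, while your $\rho_t$-cancellation lives inside your own with-reopening derivation, so the fallback does not yield the statement as written. The missing idea, which is what makes a single reopening-agnostic proof possible, is to \emph{weaken} the invariant from ``exact $g$-cost'' to a quantitative error bound. The paper tracks the shallowest vertex $v_i$ of $\Popt$ in $\open$ all of whose path-predecessors have been \emph{selected} at some point (selection history is used instead of membership in $\closed$, exactly because $\closed$ is not monotone under reopening; \Cref{lem:shallowest}), and proves by induction that its $g$-cost error satisfies $\dgtv{T}{v_i}\le\sum_{j=1}^{i-1}\inc(v_j,v_{j+1})$ (\Cref{lem:g_cost_error_bound_T}); the key inductive step (\Cref{lem:g_cost_error_bound}, Case~2) handles your stale-$\closed$ scenario --- $v_i$ expanded out of order while $v_{i-1}$ has never been selected --- by comparing $v_i$'s score against the current shallowest vertex and charging the discrepancy to prefix inconsistencies via \Cref{lem:ina_inc_bound}. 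Combined with the decomposition of \Cref{lem:decompose_subopt} and the suffix bound, this gives the theorem in both variants; without such a weakened invariant (or a fully worked-out substitute), your proposal establishes only the reopening half.
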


The theorem was proved by \citet{Valenzano2014-sw} for \Cref{alg:astar} without reopening. 
Their proof uses the fact that once a vertex is added to $\closed$, it never gets out of $\closed$. 
If we allow \astar to reopen vertices, the fact is not always true. 
Therefore, we need to prove the theorem without using the property of $\closed$. 
To this end, we define lists of selected vertices, which play a similar role to $\closed$ in our proof. 
Formally, for $\tau = 0,1,\dots, T+1$, 
we define $\selected_\tau$ as a list of vertices that have been selected in \Cref{step:astar_argmin} in the first $\tau$ iterations. 
Note that even with reopening, once a vertex is added to $\selected_\tau$, it never gets out of the list. 

As in \citep{Valenzano2014-sw}, we derive the bound in \Cref{thm:valenzano_reopen} by decomposing $\Popt$ into two subpaths, which are defined based on the following \textit{shallowest} vertex. 
 
\begin{lemma}\label{lem:shallowest} 
	We say a vertex $v_i \in \Popt = v_0, v_1, \dots, v_k$ is the shallowest vertex at $\tau \in \set*{0,1,\dots,T}$ if it satisfies the following conditions after the $\tau$th iteration: 
	\begin{align} 
		v_i \in \emph{\open}\setminus \emph{\selected}_\tau & & 
		\text{and} & & 
		\Set*{v_j \in \Popt}{j < i} \subseteq \emph{\selected}_\tau. 
	\end{align} 
	For every $\tau = 0,1,\dots,T$, a shallowest vertex always exists. 
\end{lemma}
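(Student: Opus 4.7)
The plan is to define the set of optimal-path indices whose vertex has not yet been selected, show that this set is nonempty, and then verify that its minimum element satisfies both conditions of a shallowest vertex.

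Fix $\tau\in\set*{0,1,\dots,T}$ and let $I_\tau = \Set*{i\in\set*{0,1,\dots,k}}{v_i \notin \selected_\tau}$. Since $v_k = t$ is first selected at the $(T+1)$st iteration, $v_k \notin \selected_\tau$ for all $\tau\le T$, so $k\in I_\tau$ and $I_\tau\neq\emptyset$. Let $i=\min I_\tau$. By minimality, $v_0,\dots,v_{i-1}\in\selected_\tau$, which immediately gives the second defining condition of a shallowest vertex. It remains to show $v_i\in\open\setminus\selected_\tau$; the $\selected_\tau$ part holds by $i\in I_\tau$, so the only real work is to prove $v_i\in\open$ after the $\tau$th iteration.

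Handle the base case $i=0$ first: this forces $v_0=s\notin\selected_\tau$, which can only occur at $\tau=0$ (because $s$ is selected at the very first iteration under \Cref{assump:feasible}); by the initialization in \Cref{alg:astar}, $s\in\open$ at $\tau=0$. For $i\ge 1$, use that $v_{i-1}\in\selected_\tau$, so $v_{i-1}$ was picked in \Cref{step:astar_argmin} at some earlier iteration $\tau'\le\tau$. During that iteration the algorithm iterates over the children of $v_{i-1}$, and $v_i$ is one of them because $(v_{i-1},v_i)\in E$ is an edge of $\Popt$. Inspecting the three branches of the inner \textbf{for} loop shows that after processing child $v_i$, the vertex $v_i$ belongs to $\open\cup\closed$: either it was already in $\open\cup\closed$, or it was added to $\open$ on that line.

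The key point is then that once $v_i$ enters $\open\cup\closed$ it cannot leave without being selected. The only operation that removes a vertex from $\open$ (other than the optional reopening branch, which moves $\closed\to\open$, not $\open\to\closed$) is the final ``Move $v$ from $\open$ to $\closed$'' line executed after that vertex is selected in \Cref{step:astar_argmin}. Since $v_i\notin\selected_\tau$, this removal has never happened to $v_i$ up to iteration $\tau$. Thus after iteration $\tau'$, $v_i$ either stays in $\open$, or it is already in $\closed$; but the latter would require $v_i$ to have been selected in some still earlier iteration, contradicting $v_i\notin\selected_\tau$. Therefore $v_i\in\open$ throughout iterations $\tau',\tau'+1,\dots,\tau$, completing the proof.

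The main obstacle is the reopening interaction: one must carefully track that reopening only enlarges $\open$ and never moves a never-selected vertex out of $\open$, so the implication ``$v_i$ is discovered when $v_{i-1}$ is expanded $\implies v_i\in\open$ at all later non-selected times'' is valid even in the reopening variant of \astar. Once this monotonicity observation is in place, the rest of the argument is a short index-minimization.
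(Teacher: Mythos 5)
Your proof is correct, and it takes a mildly different route from the paper's. The paper proceeds by induction on $\tau$: it tracks the shallowest vertex from one iteration to the next, showing that it either stays put (when some other vertex is selected) or advances to $v_{i''+1}$, where $v_{i''}$ is the endpoint of the longest selected subpath of $\Popt$ starting at the previous shallowest vertex. You instead give a direct, per-$\tau$ extremal argument: take $i=\min\{j: v_j\notin\selected_\tau\}$ (nonempty because $v_k=t$ is unselected through iteration $T$), obtain the second condition from minimality, and obtain $v_i\in\open$ from the invariant that a vertex enters $\open\cup\closed$ when its parent on $\Popt$ is expanded, that the only way out of $\open$ is the move to $\closed$ upon selection, and that reopening only moves vertices from $\closed$ back into $\open$. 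The two arguments rest on the same underlying facts -- the paper's inductive step implicitly uses exactly your monotonicity observation when it asserts that $v_{i''+1}$ ``must have been opened since its parent has been selected'' and hence lies in $\open\setminus\selected_\tau$ -- but your version dispenses with the induction and makes that invariant explicit, which is a bit cleaner and makes the robustness to the reopening variant more transparent. Two trivial nits: the appeal to \Cref{assump:feasible} for ``$s$ is selected at the first iteration'' is unnecessary (it follows from $\open=\{s\}$ at initialization), and for $i\ge 1$ you should note explicitly that $v_{i-1}\neq t$ (it is selected at some $\tau'\le T$, while $t$ is first selected at iteration $T+1$), so the child-expansion loop of \Cref{alg:astar} is indeed executed at that iteration rather than the algorithm returning.
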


\begin{proof}
	We prove the claim by induction. 
	If $\tau = 0$, $v_0$ is the shallowest since we have $\selected_0 = \emptyset$ and $\open = \set*{v_0}$. 
	If $\tau = 1$, $v_1$ is the shallowest vertex since we have $\selected_1 = \set*{v_0}$ and $v_1 \in \open \setminus \selected_1$. 
	Assume that the claim is true for $\tau'<\tau$ and let $v_{i'} \in \open \setminus \selected_{\tau-1}$ be the shallowest vertex at $\tau-1$. 
	We consider two cases: $v_{i'}$ or $v \neq v_{i'}$ is selected at the $\tau$th iteration. 
	If $v\neq v_{i'}$ is selected, we have $\selected_\tau = \selected_{\tau-1} \cup \set*{v}$ and $v_{i'} \in \open \setminus \selected_\tau$; thus $v_{i'}$ remains the shallowest at $\tau$. 
	If $v_{i'}$ is selected, take the longest subpath of $\Popt$ that starts from $v_{i'}$ and is contained in $\selected_\tau$.  
	We denote such a subpath by $v_{i'},\dots, v_{i''}$, where $i'' < k$ holds since $v_k$ is never selected in the first $T$ iterations. 
	From the definition of the subpath, we have $v_{i''+1} \notin \selected_\tau$. 
	Moreover, $v_{i''+1}$ must have been opened since its parent $v_{i''}$ has been selected. 
	Thus, $v_{i''+1} \in \open \setminus \selected_\tau$ holds. 
	Furthermore, we have $\set*{v_0,\dots,v_{i'-1}} \subseteq \selected_\tau$ due to the induction hypothesis and $\set*{v_{i'},\dots, v_{i''}} \subseteq \selected_\tau$ from the definition of the subpath. 
	Thus, $v_{i''+1}$ is the shallowest vertex at $\tau$. 
	To conclude, the shallowest vertex at $\tau$ exists in any case. 
	The proof is completed by induction. 
\end{proof}

\subsection{Decomposing the suboptimality term with the shallowest vertex}

For every $v \in V$, we let $g^*_v$ and $\rho^*_v$ denote the costs of optimal $s$--$v$ and $v$--$t$ paths, respectively. 
We use $g^*_{v, v'}$ to denote the cost of an optimal $v$--$v'$ path for any pair of $v, v' \in V$; it holds that $g^*_v = g^*_{s, v}$. 
For each $v \in V$, let $\gtv{\tau}{v}$ be the $g_v$ value after the $\tau$th iteration, where $\gtv{0}{s} = 0$ and $\gtv{0}{v} = \infty$ for $v\neq s$. 
If $g_v$ is updated in the ($\tau+1$)st iteration, we have $\gtv{\tau+1}{v} < \gtv{\tau}{v}$; otherwise we have $\gtv{\tau+1}{v} = \gtv{\tau}{v}$. 
Thus, $\gtv{\tau}{v}$ is non-increasing in $\tau$.   
We define $\dgtv{\tau}{v} = \gtv{\tau}{v} - g^*_v$ as the $g$-cost error of $v$ after the $\tau$th iteration, which is also non-increasing in $\tau$. 

The following lemma states that the suboptimality can be decomposed into two parts: 
a $g$-cost error of the shallowest vertex $v_i$ and the inadmissibility of $\rho_{v_i}$ (subtracted by $\rho_t$). 

\begin{lemma}\label{lem:decompose_subopt}
	If $v_i$ is the shallowest vertex at $T$, it holds that 
	\[
		\cost \le \opt + \dgtv{T}{v_i} + \rho_{v_i} - \rho^*_{v_i} - \rho_t.
	\] 
\end{lemma}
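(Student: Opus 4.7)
The plan is to combine A*'s vertex-selection rule at the terminal iteration with the optimality of $\Popt$ passing through $v_i$. In one line: $t$ being selected at iteration $T+1$ while the shallowest vertex $v_i$ is still in $\open$ forces the score inequality $\gtv{T}{t} + \rho_t \le \gtv{T}{v_i} + \rho_{v_i}$, and the rest is algebra that decomposes $g^*_{v_i}$ via optimality of $\Popt$.

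First I would invoke \Cref{lem:shallowest} to put $v_i$ into $\open \setminus \selected_T$, and then use the $\argmin$ rule in \Cref{step:astar_argmin} together with the tie-breaking rule (\Cref{assump:tie_break}) to derive the score inequality above. To convert this into a bound on $\cost$, I would observe that \Cref{alg:astar} maintains the invariant that each $g_v$ equals the cost of the $s$--$v$ path traced by the pointers $\pointer(\cdot)$: this holds initially, and every update $g_\vc \gets g_{\mathrm{new}}$ is paired with $\pointer(\vc) \gets v$, so reopening preserves it as well. Therefore the path returned when $t$ is selected has cost exactly $\gtv{T}{t}$, so $\cost = \gtv{T}{t}$, and rearranging the score inequality yields
\[
	\cost \le \gtv{T}{v_i} + \rho_{v_i} - \rho_t.
\]

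Next I would rewrite the right-hand side via optimal distances. By definition $\gtv{T}{v_i} = g^*_{v_i} + \dgtv{T}{v_i}$. Because $v_i$ lies on the optimal path $\Popt$, the identity $g^*_{v_i} + \rho^*_{v_i} = \opt$ holds: splitting $\Popt$ at $v_i$ into an $s$--$v_i$ prefix and a $v_i$--$t$ suffix gives an upper bound $\opt \ge g^*_{v_i} + \rho^*_{v_i}$, while concatenating optimal $s$--$v_i$ and $v_i$--$t$ paths produces an $s$--$t$ path establishing the matching lower bound. Substituting $g^*_{v_i} = \opt - \rho^*_{v_i}$ into the inequality above yields the claim.

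No step presents a real obstacle. The main points that require care are (i) verifying the pointer-path invariant $\cost = \gtv{T}{t}$ even when reopening happens, and (ii) routing ties in \Cref{step:astar_argmin} correctly through \Cref{assump:tie_break} so the weak inequality between scores is guaranteed. Both are routine consequences of inspecting \Cref{alg:astar}, and together they reduce the proof to two short displayed inequalities.
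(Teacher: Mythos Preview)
Your proposal is correct and follows essentially the same route as the paper: derive the score inequality $\gtv{T}{t} + \rho_t \le \gtv{T}{v_i} + \rho_{v_i}$ from the fact that $t$ is selected over the shallowest vertex $v_i$ at iteration $T+1$, then substitute $\gtv{T}{v_i} = g^*_{v_i} + \dgtv{T}{v_i}$ and $g^*_{v_i} + \rho^*_{v_i} = \opt$. The paper simply asserts $\cost \le \gtv{T}{t}$ where you argue the equality via the pointer invariant, and the tie-breaking remark is unnecessary since the weak score inequality follows directly from $t$ being an $\argmin$; otherwise the arguments coincide.
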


\begin{proof}
	After the $T$th iteration, the score of $v_i$ is 
	\begin{align}
		\gtv{T}{v_i} + \rho_{v_i} 
		&= g^*_{v_i} + \dgtv{T}{v_i} +\rho_{v_i} \\
		&= g^*_{v_i} + \rho^*_{v_i} + \dgtv{T}{v_i} +\rho_{v_i} - \rho^*_{v_i} \\
		&= \opt + \dgtv{T}{v_i} +\rho_{v_i} - \rho^*_{v_i}.
	\end{align}
	Since $v_k = t$ is selected at the ($T+1$)st iteration instead of $v_i$, it holds that $\gtv{T}{t} + \rho_{t}\le\gtv{T}{v_i} + \rho_{v_i}$. 
	Since we have $\cost \le \gtv{T}{t}$, we obtain the statement by rearranging the terms. 
\end{proof}

\subsection{Bounding $\rho_{v_i} - \rho^*_{v_i} - \rho_t$}
 
We prove a general lemma for later use, which implies an upper bound on $\rho_{v_i} - \rho^*_{v_i} - \rho_t$.  
\begin{lemma}\label{lem:ina_inc_bound}
	Let $P = v_0,v_1,\dots,v_k$ be any optimal $v_0$--$v_k$ path. It holds that
	\[
		\rho_{v_0} - \rho_{v_k} - g^*_{v_0, v_k} \le \sum_{i = 0}^{k-1} \inc(v_i, v_{i+1}).
	\]
	
\end{lemma}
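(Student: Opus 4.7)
The plan is to prove this by a simple telescoping argument, exploiting the definition of the inconsistency as a clipping-at-zero of a signed edge quantity. First I would rewrite the left-hand side using the fact that $P = v_0, v_1, \dots, v_k$ is an optimal $v_0$--$v_k$ path, which gives
\[
	g^*_{v_0, v_k} = \sum_{i=0}^{k-1} w_{(v_i, v_{i+1})}
\]
since the cost of the shortest $v_0$--$v_k$ path equals the total weight along $P$.

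Next I would telescope the $\rho$-terms:
\[
	\rho_{v_0} - \rho_{v_k} = \sum_{i=0}^{k-1} \prn*{\rho_{v_i} - \rho_{v_{i+1}}}.
\]
Combining the two displays yields
\[
	\rho_{v_0} - \rho_{v_k} - g^*_{v_0, v_k} = \sum_{i=0}^{k-1} \prn*{\rho_{v_i} - \rho_{v_{i+1}} - w_{(v_i, v_{i+1})}}.
\]

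Finally I would apply the elementary bound $x \le \max\set*{x, 0}$ to each summand, which by the definition \eqref{eq:inc_def} of $\inc(\cdot,\cdot)$ is exactly $\inc(v_i, v_{i+1})$. Summing over $i = 0, \dots, k-1$ then produces the desired inequality. There is no real obstacle here; the only subtlety is remembering to invoke the optimality of $P$ to replace $g^*_{v_0, v_k}$ by the sum of edge weights along $P$, without which the telescoping identity would not match up with the clipped edge-wise inconsistencies on the right-hand side.
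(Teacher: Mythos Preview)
Your proposal is correct and follows essentially the same approach as the paper's proof: both use the telescoping identity $\rho_{v_0} - \rho_{v_k} = \sum_i (\rho_{v_i} - \rho_{v_{i+1}})$, the optimality of $P$ to write $g^*_{v_0,v_k} = \sum_i w_{(v_i,v_{i+1})}$, and the pointwise bound $\rho_{v_i} - \rho_{v_{i+1}} - w_{(v_i,v_{i+1})} \le \inc(v_i,v_{i+1})$. The only difference is the order of presentation.
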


\begin{proof}
	From the definition \eqref{eq:inc_def}, $\inc(v_i, v_{i+1}) \ge \rho_{v_i} - \rho_{v_{i+1}} - w_{(v_i, v_{i+1})}$ holds. 
	Therefore, we have 
	\[
		\sum_{i=0}^{k-1} \prn*{\rho_{v_i} - \rho_{v_{i+1}} - w_{(v_i, v_{i+1})}} \le \sum_{i=0}^{k-1} \inc(v_i, v_{i+1}). 
	\]
	Using a telescoping sum argument, we obtain 
	\[
		 \rho_{v_0} - \rho_{v_{k}} - \sum_{i=0}^{k-1} w_{(v_i, v_{i+1})} \le \sum_{i=0}^{k-1} \inc(v_i, v_{i+1}).
	\]
	Since $P$ is optimal, we have $\sum_{i=0}^{k-1} w_{(v_i, v_{i+1})} = g^*_{v_0, v_k}$, thus completing the proof. 
\end{proof}
 
Consider applying \Cref{lem:ina_inc_bound} to the subpath $P=v_i,\dots,v_k$ of $\Popt$, which is an optimal $v_i$--$v_k$ path. 
Since $g^*_{v_i, v_k} = \rho^*_{v_i}$ and $\rho_{v_k} = \rho_t$, it holds that 
\begin{equation}\label{eq:inad_bound_cor}
	\rho_{v_i} - \rho^*_{v_i} - \rho_t \le \sum_{j = i}^{k-1} \inc(v_j, v_{j+1}).   
\end{equation}
Thus, we can obtain an upper bound on $\rho_{v_i} - \rho^*_{v_i} - \rho_t$. 

\subsection{Bounding $\dgtv{T}{v_i}$}

Our goal is to prove the following lemma. 
\begin{lemma}\label{lem:g_cost_error_bound_T}
	Let $\Popt = v_0,\dots,v_k$ be the optimal $s$--$t$ path in the statement of \Cref{thm:valenzano_reopen}. 
	Then, the shallowest vertex $v_i \in \Popt$ at $T$ satisfies 
	\[
		\dgtv{T}{v_i} \le \sum_{j=1}^{i-1} \inc(v_j, v_{j+1}).
	\]	
\end{lemma}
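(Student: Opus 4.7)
The plan is to prove by strong induction on $j$ the auxiliary claim that, for every $j \in \set*{0, 1, \ldots, i-1}$, if $\tau_j$ denotes the iteration at which $v_j$ is first selected in \Cref{step:astar_argmin}, then $a_j \coloneqq \dgtv{\tau_j}{v_j}$ satisfies
\[
	a_j \le \sum_{k=1}^{j-1} \inc(v_k, v_{k+1}),
\]
where the right-hand side is an empty sum for $j \le 1$. Note that $\tau_j$ is well-defined whenever $j < i$ because then $v_j \in \selected_T$. Once this claim is established, the lemma follows from a single edge-relaxation at iteration $\tau_{i-1} \le T$: when $v_{i-1}$ is expanded, the algorithm enforces $\gtv{\tau_{i-1}}{v_i} \le \gtv{\tau_{i-1}}{v_{i-1}} + w_{(v_{i-1}, v_i)}$, which, combined with $g^*_{v_i} = g^*_{v_{i-1}} + w_{(v_{i-1}, v_i)}$ (optimality of $\Popt$) and the monotonicity of $g$ in the iteration index, yields $\dgtv{T}{v_i} \le a_{i-1} \le \sum_{k=1}^{i-1} \inc(v_k, v_{k+1})$.

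The base case $j = 0$ of the claim is immediate because $\gtv{\tau_0}{v_0} = 0 = g^*_{v_0}$. For $j \ge 1$, I apply \Cref{lem:shallowest} at iteration $\tau_j - 1$ to obtain a shallowest vertex $v_{j^*}$ of $\Popt$; since $v_j \in \open \setminus \selected_{\tau_j - 1}$ and $s$ has been selected before $\tau_j$, the index satisfies $1 \le j^* \le j$. If $j^* = j$, then all of $v_0, \ldots, v_{j-1}$ already lie in $\selected_{\tau_j - 1}$, so $\tau_{j-1} < \tau_j$, and the same edge-relaxation plus monotonicity argument used above gives $a_j \le a_{j-1}$, which the inductive hypothesis closes. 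If instead $1 \le j^* < j$, both $v_j$ and $v_{j^*}$ reside in $\open$ at $\tau_j - 1$, and the tie-breaking rule (\Cref{assump:tie_break}) forces $\gtv{\tau_j - 1}{v_j} + \rho_{v_j} \le \gtv{\tau_j - 1}{v_{j^*}} + \rho_{v_{j^*}}$. Rearranging this score comparison and applying \Cref{lem:ina_inc_bound} to the optimal subpath $v_{j^*}, v_{j^*+1}, \ldots, v_j$ of $\Popt$ yields $a_j \le \dgtv{\tau_j - 1}{v_{j^*}} + \sum_{k=j^*}^{j-1} \inc(v_k, v_{k+1})$. The first term is bounded by $a_{j^*-1}$ via the edge-relaxation on the edge $(v_{j^*-1}, v_{j^*})$ at iteration $\tau_{j^*-1} \le \tau_j - 1$ (available because $v_{j^*-1} \in \selected_{\tau_j - 1}$) together with monotonicity of $g$; combining with the inductive hypothesis at $j^*-1$ and the non-negativity of $\inc$ closes the induction.

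I expect the main obstacle to be the possibility of reopening: a vertex that has once been selected may later reappear in $\open$, so the once-closed-always-closed invariant that simplified the no-reopening analysis of \citet{Valenzano2014-sw} is unavailable. The plan above sidesteps this by reasoning exclusively through $\selected$ (which only grows) and the monotonicity of the $g$-costs, both of which hold regardless of reopening. A secondary delicacy is that the iterations $\tau_0, \ldots, \tau_{i-1}$ need not appear in chronological order in $j$, which forces the induction to be on the path-index $j$ rather than on time and to route through a shallowest-vertex index $j^*$ that may jump arbitrarily far below $j$.
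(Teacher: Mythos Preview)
Your proposal is correct and uses the same two mechanisms as the paper's proof: edge-relaxation along $\Popt$ (to pass the $\delta g$-bound from $v_{j-1}$ to $v_j$) and a score comparison with the shallowest vertex combined with \Cref{lem:ina_inc_bound} (to handle the case where the parent has not yet been selected). The only organizational difference is the induction variable: the paper proves an invariant over time $\tau$ (every $v_i\in\Popt\cap\selected_\tau$ satisfies the bound; this is \Cref{lem:g_cost_error_bound}) and then specializes to $\tau=T$, whereas you run strong induction directly on the path index $j$ with the quantity $a_j=\dgtv{\tau_j}{v_j}$. Your case split ``$j^*=j$ versus $j^*<j$'' does not coincide exactly with the paper's ``$v_{i-1}\in\selected_\tau$ versus $v_{i-1}\notin\selected_\tau$'', but both splits route into the same two calculations, and the residual gap at index $j^*-1$ that your bound leaves is harmlessly absorbed by the non-negativity of $\inc$, exactly as you note. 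The paper's time-indexed invariant is marginally more general, while your path-indexed version is more direct for the specific conclusion of \Cref{lem:g_cost_error_bound_T}; neither buys anything substantive over the other here.
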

To prove \Cref{lem:g_cost_error_bound_T}, we need the following two lemmas.

\begin{lemma}\label{lem:g_cost_error_bound_induction}
	For $\Popt$ in \Cref{lem:g_cost_error_bound_T} and $i\ge 1$, if $v_{i-1} \in \Popt$ is first selected at the $\tau'$th iteration and satisfies $\dgtv{\tau'}{v_{i-1}} \le \sum_{j=1}^{i-2} \inc(v_j, v_{j+1})$, then $v_i \in \Popt$ satisfies $\dgtv{\tau}{v_i} \le \sum_{j=1}^{i-1} \inc(v_j, v_{j+1})$ for all $\tau = \tau',\dots,T$.
\end{lemma}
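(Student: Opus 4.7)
The plan is to bound $\dgtv{\tau'}{v_i}$ directly from the effect of expanding $v_{i-1}$ at iteration $\tau'$, and then propagate the bound to all later iterations $\tau\le T$ using the monotonicity of $g$-costs noted earlier in the appendix.

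First I would zoom in on the $\tau'$th iteration of \Cref{alg:astar}, in which $v_{i-1}$ is selected for the first time. The selected vertex's $g$-cost is never modified during its own expansion, so $\gtv{\tau'}{v_{i-1}} = \gtv{\tau'-1}{v_{i-1}}$, and this value is exactly what the for-loop uses when computing $g_\mathrm{new} = g_{v_{i-1}} + w_{(v_{i-1}, v_i)}$ for the child $v_i$. A short case analysis over the if-branches (the child is unseen, already in $\open$, or in $\closed$ and being reopened) shows that in every case $\gtv{\tau'}{v_i} \le g_\mathrm{new}$; this remains true even without reopening because the only discarded case is when $v_i$'s existing $g$-cost is already at most $g_\mathrm{new}$. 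Hence $\gtv{\tau'}{v_i} \le \gtv{\tau'}{v_{i-1}} + w_{(v_{i-1}, v_i)}$.

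Next, since any subpath of an optimal path is itself optimal, the prefix $v_0, v_1, \dots, v_i$ of $\Popt$ is an optimal $s$--$v_i$ path, giving $g^*_{v_i} = g^*_{v_{i-1}} + w_{(v_{i-1}, v_i)}$. Subtracting this identity from the inequality above yields $\dgtv{\tau'}{v_i} \le \dgtv{\tau'}{v_{i-1}}$, and combining this with the hypothesis $\dgtv{\tau'}{v_{i-1}} \le \sum_{j=1}^{i-2} \inc(v_j, v_{j+1})$ together with the nonnegativity of $\inc$ already delivers $\dgtv{\tau'}{v_i} \le \sum_{j=1}^{i-1} \inc(v_j, v_{j+1})$ at $\tau = \tau'$. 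Extending to $\tau \in \set*{\tau'+1, \dots, T}$ is then immediate: since $\gtv{\tau}{v_i}$ is non-increasing in $\tau$, so is $\dgtv{\tau}{v_i}$, and the bound at $\tau'$ dominates every later value.

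The main obstacle I anticipate is getting the case analysis in the first step right in the presence of reopening: one must verify $\gtv{\tau'}{v_i} \le g_\mathrm{new}$ uniformly across all branches of the if-else cascade in \Cref{alg:astar}, including the edge cases where $v_i$ had previously been closed with a $g$-cost larger than $g_\mathrm{new}$ (so \Cref{step:reopening_if,step:parent_pointer_updating,step:reopening_o} apply) or smaller (so no update occurs). Once this is spelled out, the remainder is a one-line subtraction plus the monotonicity remark, and no deeper use of the algorithm's tie-breaking or structure is required.
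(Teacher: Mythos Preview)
Your proposal is correct and follows essentially the same route as the paper: both establish $\gtv{\tau'}{v_i} \le \gtv{\tau'}{v_{i-1}} + w_{(v_{i-1}, v_i)}$ from the update rule, subtract the optimal-cost identity $g^*_{v_i} = g^*_{v_{i-1}} + w_{(v_{i-1}, v_i)}$ to get $\dgtv{\tau'}{v_i} \le \dgtv{\tau'}{v_{i-1}}$, and then invoke the hypothesis, the nonnegativity of $\inc$, and the monotonicity of $\dgtv{\tau}{v_i}$ in $\tau$. The only cosmetic difference is that the paper separates out the $i=1$ base case explicitly (where $\dgtv{\tau'}{v_0}=0$ because $v_0=s$), whereas you fold it into the empty-sum convention.
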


\begin{proof}
	In the $\tau'$th iteration, we update $g_{v_i}$ if it is larger than $g_{\mathrm{new}} = \gtv{\tau'}{v_{i-1}} + w_{(v_{i-1}, v_i)}$, hence $\gtv{\tau'}{v_{i}} \le \gtv{\tau'}{v_{i-1}} + w_{(v_{i-1}, v_i)}$. 
	Since $(v_{i-1}, v_i)$ is an edge on $\Popt$, $g^*_{v_i} = g^*_{v_{i-1}} + w_{(v_{i-1}, v_i)}$ holds. Therefore, it holds that $\dgtv{\tau'}{v_{i}} \le \dgtv{\tau'}{v_{i-1}}$. 
	Since $\dgtv{\tau}{v_i}$ is non-increasing in $\tau$, we have $\dgtv{\tau}{v_i} \le \dgtv{\tau'}{v_i}$. 
	If $i = 1$, since $v_0 = s$, we obtain 
	\[
		\dgtv{\tau}{v_1}\le \dgtv{\tau'}{v_1} \le \dgtv{\tau'}{v_0} = \gtv{\tau'}{v_0} - g^*_{v_0} = 0 - 0 = 0.
	\] 
	If $i > 1$, we have 
	\[
		\dgtv{\tau}{v_i} \le \dgtv{\tau'}{v_i} \le \dgtv{\tau'}{v_{i-1}} \le \sum_{j=1}^{i-2} \inc(v_j, v_{j+1}) \le \sum_{j=1}^{i-1} \inc(v_j, v_{j+1}), 
	\]
	where we used the assumption on $\dgtv{\tau'}{v_{i-1}}$ and $\inc(v_{i-1}, v_i) \ge 0$. 
\end{proof}

\begin{lemma}\label{lem:g_cost_error_bound}
	For $\Popt$ in \Cref{lem:g_cost_error_bound_T} and any $\tau = 0,1,\dots, T$, every $v_i \in \Popt \cap \emph{\selected}_\tau$ satisfies $\dgtv{\tau}{v_i} \le \sum_{j=1}^{i-1} \inc(v_j, v_{j+1})$.	
\end{lemma}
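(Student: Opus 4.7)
The plan is to prove the lemma by induction on the iteration index $\tau$, using \Cref{lem:g_cost_error_bound_induction} to propagate the bound along $\Popt$ and \Cref{lem:shallowest} to rescue the argument when \astar selects a vertex of $\Popt$ out of its natural path order. The base case $\tau=0$ is vacuous since $\selected_0=\emptyset$. For the inductive step, I would let $v$ be the vertex selected at iteration $\tau+1$; every $v_i \in \Popt\cap\selected_\tau$ still satisfies the claim at $\tau+1$ because $\dgtv{\cdot}{v_i}$ is non-increasing, so the remaining obligation is only to verify the bound when $v=v_i\in\Popt$, and the case $i=0$ is immediate from $\gtv{\cdot}{s}=0$.

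For $i\ge 1$ I would split into two subcases. If $v_{i-1}\in\selected_\tau$, let $\tau'\le\tau$ be its first selection time; the inductive hypothesis at $\tau'$ gives $\dgtv{\tau'}{v_{i-1}}\le\sum_{j=1}^{i-2}\inc(v_j,v_{j+1})$, and \Cref{lem:g_cost_error_bound_induction} directly delivers the desired bound on $v_i$. If instead $v_{i-1}\notin\selected_\tau$, I would invoke \Cref{lem:shallowest} to pick the shallowest vertex $v_m$ on $\Popt$ at time $\tau$; the definition forces $m\le i-1$, $v_{m-1}\in\selected_\tau$, and $v_m\in\open\setminus\selected_\tau$. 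Applying the inductive hypothesis to $v_{m-1}$ and then \Cref{lem:g_cost_error_bound_induction} to the pair $(v_{m-1},v_m)$ yields $\gtv{\tau}{v_m}\le g^*_{v_m}+\sum_{j=1}^{m-1}\inc(v_j,v_{j+1})$, even though $v_m$ itself has not been selected. Since $v_i$ is chosen as the score-minimiser over $\open$ at iteration $\tau+1$ and $v_m$ is a competitor in $\open$, comparing scores gives $\gtv{\tau}{v_i}+\rho_{v_i}\le\gtv{\tau}{v_m}+\rho_{v_m}$. Substituting the $v_m$ estimate and invoking \Cref{lem:ina_inc_bound} on the optimal subpath $v_m,\dots,v_i$ of $\Popt$ to bound the residual heuristic gap $\rho_{v_m}-\rho_{v_i}-\sum_{j=m}^{i-1}w_{(v_j,v_{j+1})}$ by $\sum_{j=m}^{i-1}\inc(v_j,v_{j+1})$ finally assembles the two partial sums into $\sum_{j=1}^{i-1}\inc(v_j,v_{j+1})$, as required.

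The main obstacle is the out-of-order selection phenomenon that the shallowest-vertex subcase resolves: since \astar may reach $v_i$ via short off-$\Popt$ detours, $v_i$ can be selected strictly before $v_{i-1}$, which would break any naive induction "along $\Popt$". The bridging step transports the inductive bound from the earlier-indexed but possibly unselected $v_m$ over to $v_i$ by charging the gap to the path-wise inconsistency bound of \Cref{lem:ina_inc_bound}. Once this bridge is set up, all remaining pieces are either direct applications of \Cref{lem:g_cost_error_bound_induction} or standard monotonicity of $\gtv{\cdot}{v_i}$, and the induction closes without further combinatorial machinery.
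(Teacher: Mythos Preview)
Your proposal is correct and follows essentially the same approach as the paper's proof: induction on $\tau$, monotonicity of $\dgtv{\cdot}{v_i}$ for previously selected vertices, the same two-case split on whether $v_{i-1}\in\selected_\tau$, and in the out-of-order case the same use of the shallowest vertex (your $v_m$ is the paper's $v_{i'}$) together with the score comparison and \Cref{lem:ina_inc_bound} on the subpath $v_m,\dots,v_i$. The only cosmetic differences are that the paper also spells out $\tau=1$ as a base case and notes explicitly that Case~2 forces $i>1$ (hence $m\ge 1$, so $v_{m-1}$ exists), which you may want to state for completeness.
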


\begin{proof}
	The proof is by induction on $\tau$. 
	If $\tau = 0$, the claim is vacuously true since $\selected_\tau = \emptyset$. 
	If $\tau = 1$, only $v_0$ is in $\selected_1$. 
	Since $\gtv{1}{v_0} = g^*_{v_0} = 0$, we have $\dgtv{1}{v_0} = 0$. 
	Thus, the claim is true. 

	Assume that the claim is true after the first $\tau\ge1$ iterations; 
	in other words, 
	for any $\tau'\le \tau$, 
	every $v_{i'} \in \Popt \cap \selected_{\tau'}$ satisfies $\dgtv{\tau'}{v_{i'}} \le \sum_{j=1}^{i'-1} \inc(v_j, v_{j+1})$.  
	Since $\dgtv{\tau}{v_{i'}}$ is non-increasing in $\tau$, from the induction hypothesis, vertices in $\Popt \cap \selected_\tau$ remain to satisfy the inequality after the ($\tau+1$)st iteration. 
	Therefore, we focus on the vertex selected at the ($\tau+1$)st iteration, which is the only new vertex in $\selected_{\tau+1}$. 
	If the selected vertex is not in $\Popt$, the statement is true after the ($\tau+1$)st iteration.
	We below discuss the case where the selected vertex is in $\Popt$. 
	We let $v_i \in \Popt$ be the selected vertex, where $i \ge 1$, and discuss two cases: 
	$v_i$'s parent, $v_{i-1}$, is in $\selected_{\tau}$ or not.  

	\paragraph{Case 1: $v_{i-1} \in \selected_{\tau}$.}
	Suppose that $v_{i-1}$ has been first selected at the $\tau'$th iteration ($\tau' \le \tau$). 
	%In this case, there exists $\tau' \le \tau$ such that $v_{i-1}$ is first selected at the $\tau'$th iteration. 
	The induction hypothesis implies $\dgtv{\tau'}{v_{i-1}} \le \sum_{j=1}^{i-2} \inc(v_j, v_{j+1})$. 
	Thus, from \Cref{lem:g_cost_error_bound_induction}, we obtain $\dgtv{\tau+1}{v_i} \le \sum_{j=1}^{i-1} \inc(v_j, v_{j+1})$. 
	
	\paragraph{Case 2: $v_{i-1} \notin \selected_{\tau}$.} 
	In this case, we have $i > 1$ since $v_0$ is selected at the first iteration. 
	Let $v_{i'}$ be the shallowest vertex at $\tau$. 
	Since $v_{j} \in \selected_\tau$ must hold for all $j < i'$, we have $i' < i$. 
	Since $v_i$ is selected instead of $v_{i'}$, it holds that $\gtv{\tau}{v_i} + \rho_{v_i} \le \gtv{\tau}{v_{i'}} + \rho_{v_{i'}}$. Therefore, we have 
	\[
		\dgtv{\tau}{v_i} 
		\le 
		\gtv{\tau}{v_{i'}} - g^*_{v_i} + \rho_{v_{i'}} - \rho_{v_i}
		= 
		\dgtv{\tau}{v_{i'}} + g^*_{v_{i'}} - g^*_{v_i} + \rho_{v_{i'}} - \rho_{v_i}
		= 
		\dgtv{\tau}{v_{i'}} + \rho_{v_{i'}} - \rho_{v_i} - g^*_{v_{i'}, v_i}.
	\]
	We below consider bounding the right-hand side. 
	First, we discuss a bound on $\dgtv{\tau}{v_{i'}}$. 
	Suppose that $v_{i'-1} \in \selected_\tau$ is first selected at the $\tau'$th iteration, where $\tau' \le \tau$. 
	From the induction hypothesis, we have $\dgtv{\tau'}{v_{i'-1}} \le \sum_{j=1}^{i'-2} \inc(v_j, v_{j+1})$. 
	Therefore, \Cref{lem:g_cost_error_bound_induction} implies $\dgtv{\tau}{v_{i'}} \le \sum_{j=1}^{i'-1} \inc(v_j, v_{j+1})$. 
	Next, from \Cref{lem:ina_inc_bound}, we have $\rho_{v_{i'}} - \rho_{v_i} - g^*_{v_{i'}, v_i} \le \sum_{j = i'}^{i-1} \inc(v_j, v_{j+1})$. 
	Consequently, we obtain 
	\[
		\dgtv{\tau+1}{v_i} \le \dgtv{\tau}{v_i} 
		\le 
		\sum_{j=1}^{i'-1} \inc(v_j, v_{j+1}) + \sum_{j = i'}^{i-1} \inc(v_j, v_{j+1}) 
		= 
		\sum_{j = 1}^{i-1} \inc(v_j, v_{j+1}). 
	\]	
	To conclude, every $v_i \in \Popt \cap \selected_{\tau+1}$ satisfies $\dgtv{\tau+1}{v_{i}} \le \sum_{j=1}^{i-1} \inc(v_j, v_{j+1})$. 
	Therefore, the statement is true by induction. 
\end{proof}

Now, we are ready to prove \Cref{lem:g_cost_error_bound_T}. 
\begin{proof}[Proof of \Cref{lem:g_cost_error_bound_T}]
	Since $v_i$ is the shallowest at $T$, $v_{i-1}$ has been first selected at some $\tau$th iteration, where $\tau \le T$, 
	i.e., $v_{i-1} \in \Popt \cap \selected_\tau$. 
	Thus, \Cref{lem:g_cost_error_bound} implies  
	$\dgtv{\tau}{v_{i-1}} \le \sum_{j=1}^{i-2} \inc(v_j, v_{j+1})$. 
	Therefore, from \Cref{lem:g_cost_error_bound_induction}, we obtain 
	$\dgtv{T}{v_{i}} \le \sum_{j=1}^{i-1} \inc(v_j, v_{j+1})$. 
\end{proof}

\subsection{Proof of \texorpdfstring{\Cref{thm:valenzano_reopen}}{Theorem \ref{thm:valenzano_reopen}}}

By summing up the above lemmas and equations, we prove \Cref{thm:valenzano_reopen}. 

\begin{proof}[Proof of \Cref{thm:valenzano_reopen}]
	By using \Cref{lem:decompose_subopt,lem:g_cost_error_bound_T} and \cref{eq:inad_bound_cor}, we obtain 
	\begin{align}
		\cost 
		&\le 
		\opt + \dgtv{T}{v_i} + \rho_{v_i} - \rho^*_{v_i} - \rho_t
		\\
		&\le 
		\opt + \sum_{j=1}^{i-1} \inc(v_j, v_{j+1}) + \sum_{j = i}^{k-1} \inc(v_j, v_{j+1}) 
		\\
		&= 
		\opt + \sum_{j = 1}^{k-1} \inc(v_j, v_{j+1}).\qedhere
	\end{align} 
\end{proof}

\end{document}